\newcommand{\ii}{\boldsymbol{i}}
\newcommand{\sign}{\text{sign}}
\newcommand{\re}[1]{\text{Re}\left\{#1\right\}}
\newcommand{\im}[1]{\text{Im}\left\{#1\right\}}
\newtheorem{Definition}{Definition}
\newtheorem{Theorem}{Theorem}
\theoremstyle{definition}
\newtheorem{Example}{Example}
\newcommand{\new}[1]{{#1}}
\author[1]{Rama Murthy Garimella\footnote{Email: rama.murthy@mahindrauniversity.edu.in, ORCID: 0000-0002-5669-1846}}
\author[2]{Marcos Eduardo Valle\footnote{Email: valle@ime.unicamp.br, ORCID: 0000-0003-4026-5110}}
\author[2]{Guilherme Vieira\footnote{Email: vieira.g@ime.unicamp.br, ORCID: 0000-0003-3361-6154}}
\author[3]{Anil Rayala\footnote{Email: anil.rayala@students.iiit.ac.in, ORCID: 0000-0002-2836-1233}}
\author[4]{Dileep Munugoti\footnote{Email: d.munugoti@iitg.ernet.in, ORCID: 0000-0002-6830-580X}}
\affil[1]{Mahindra University, Ecole Centrale School of Engineering, Hyderabad, India.}
\affil[2]{Universidade Estadual de Campinas (UNICAMP), Campinas, Brazil}
\affil[3]{International Institute of Information Technology, Hyderabad, India.}
\affil[4]{Indian Institute of Technology, Guwahati, India}
\title{Dynamics of Structured Complex-Valued Hopfield Neural Networks\thanks{This work was supported in part by the National Council for Scientific and Technological Development (CNPq) under grant no 315820/2021-7, the S\~ao Paulo Research Foundation (FAPESP) under grant no 2023/03368-0, and the Postdoctoral Researcher Program (PPPD) at the Universidade Estadual de Campinas (UNICAMP).}
}
\begin{document}

\maketitle

\begin{abstract}
\new{
In this paper, we explore the dynamics of structured complex-valued Hopfield neural networks (CvHNNs), which arise when the synaptic weight matrix possesses specific structural properties. We begin by analyzing CvHNNs with a Hermitian synaptic weight matrix and establish the existence of four-cycle dynamics in CvHNNs with skew-Hermitian weight matrices operating synchronously. Furthermore, we introduce two new classes of complex-valued matrices: braided Hermitian and braided skew-Hermitian matrices. We demonstrate that CvHNNs utilizing these matrix types exhibit cycles of length eight when operating in full parallel update mode. Finally, we conduct extensive computational experiments on synchronous CvHNNs, exploring other synaptic weight matrix structures. The findings provide a comprehensive overview of the dynamics of structured CvHNNs, offering insights that may contribute to developing improved associative memory models when integrated with suitable learning rules.
}  

\vspace{1em}
\noindent \textbf{\textit{Keywords}---}
 Hopfield neural network, complex-valued neural network, associative memory, braided Hermitian matrix.
\end{abstract}

\section{Introduction} \label{sec:introduction}

Artificial neural networks have been conceived as emulators of the biological neural network synapse process. Their processing units, the artificial neurons, usually act based on input signals received from other neurons or cells. Like a biological neuron firing an electric impulse in the presence of specific chemical components in appropriate concentrations, an artificial neuron fires when certain mathematical conditions are satisfied. The first artificial neuron model, referred to as the threshold neuron, is credited to McCulloch and Pitts \cite{mcculloch43}.  The threshold neuron is activated if a weighted sum of the inputs is greater than or equal to a certain threshold; otherwise, it does not activate. The threshold neuron was thoroughly experimented with and developed upon, giving birth to many neural network models featuring non-linearities, hidden layers, recurrent connections, alternative learning rules, and many more configurations. However, all the parameters and inputs in the threshold neuron and derived models were set to real numbers. This restriction was eventually lifted by Aizenberg and collaborators, creating the branch of complex-valued neural networks \cite{aizenberg11book,Murthy2004Complex-valuedHypercube}. 

Hopfield neural networks (HNNs) are one of the earlier models based on the threshold neuron responsible for the resurgent interest in artificial neural networks in the 1980s \cite{hopfield1982neural}. \new{Besides being a well-established model for associative memories, these models find many applications, including pattern recognition and diagnosis \cite{Alenezi2025Pixel-guidedDiagnosis}, matrix factorization \cite{Li2016,Li2024BinaryOptimization}, time series prediction \cite{AbouHassan2024}, and combinatorial optimization problems \cite{Serpen_2008,Pajares2010,Uykan2020OnOptimization}. Furthermore, they are closely related to the attention mechanism used in transformer architecture \cite{castro20nn,Krotov2020LargeLearning,Ramsauer2020HopfieldNeed,karakida2024hierarchicalassociativememoryparallelized}.} Briefly, an HNN consists of $N$ bipolar threshold neurons that are fully interconnected. A single bipolar threshold neuron outputs either $+1$ or $-1$, and the network state can be interpreted as a vertex in the unit hypercube $\{-1,+1\}^N$. The Hopfield model is a recurrent neural network since each neuron receives inputs from all other neurons. An HNN is often associated with a directed graph, with the nodes being the neurons and the edges corresponding to the synaptic connections. In the original Hopfield model, the synaptic weights are arranged in a symmetric matrix, ensuring stable dynamics. Inspired by graph theory, the authors proposed HNNs with arbitrary weight matrices, which, besides the biological plausibility, may lead to cycles in the state space \cite{DBLP:conf/ijcnn/GarimellaKG15}. As a particular case, Goles \textit{et. al} studied HNNs with antisymmetric weight matrices operating in parallel mode and, in this case, observed cycles of length four \cite{DBLP:journals/dam/Goles86}.

Several models of complex-valued Hopfield neural networks (CvHNNs) have been proposed in the literature \cite{castro18cnmac,DBLP:journals/tnn/JankowskiLZ96,kobayashi17e,lee06,Kobayashi2025Complex-valuedConnections}. We studied the dynamics of one such model with Hermitian weight matrices in \cite{murthy2004complex}. As an immediate follow-up to our previous study, this paper addresses structured CvHNNs, that is, Hopfield neural networks with synaptic weight matrices with specific forms. Precisely, we investigate CvHNN with skew Hermitian weight matrices. We also introduce two new matrix forms, namely braided Hermitian and braided skew-Hermitian, along with experiments and theoretical results regarding the dynamics of CvHNN with these forms of the weight matrix. The present paper is an extension of the conference paper \cite{Garimella_2016}.

\new{
The paper is organized as follows: The next section introduces the notation used throughout the document. Section \ref{sec:CvHNNs} discusses the dynamics of structured complex-valued Hopfield neural networks (HNNs), including models that use Hermitian and skew-Hermitian synaptic weight matrices. This section also explores the dynamics of complex-valued Hopfield neural networks (CvHNNs) with two new types of complex-valued weight matrices: braided Hermitian and braided skew-Hermitian matrices. In Section \ref{sec:experiments}, we present extensive computational experiments that investigate the dynamics of structured CvHNNs with various weight matrices. Section \ref{sec:polar} examines the dynamics of CvHNNs based on the polar representation of the synaptic weights. The paper concludes with some final remarks in Section \ref{sec:concluding}, which include future work and potential applications.
}  

\section{\new{Notation}} \label{sec:notation}

\new{
Before proceeding with the theoretical development, let us introduce the notations used throughout the paper:}
\begin{itemize}
    \item \new{\( i \) — The imaginary unit, where \( i^2 = -1 \).}
    \item \new{\( z = a + bi \) — A complex number with real part \( a \) and imaginary part \( b \).}
    \item \new{\( \text{Re}\{z\} \) and \( \text{Im}\{z\} \) — The real and imaginary parts of \( z \), respectively.}
    \item \new{\( \mathbb{C}^{N \times N} \) — The space of \( N \times N \) complex-valued matrices.}
    \item \new{\( M \) — The synaptic weight matrix of the complex-valued Hopfield neural network.}
    \item \new{\( T \) — The threshold vector of the neural network.}
    \item \new{$R = (M,T)$ -- Short notation for a CvHNN with synaptic weight $M$ and threshold vector $T$.}
    \item \new{\( S(t) \) — The state vector of the neural network at time \( t \).}
    \item \new{\( \text{sign}(z) \) — The complex-valued activation function, applied separately to real and imaginary parts.}
    \item \new{\( L \) — Cycle length, the number of steps after which the neural network returns to a previous state.}
    \item \new{\textbf{Hermitian matrix:} A matrix \( M \) satisfying \( M^* = M \).}
    \item \new{\textbf{Skew-Hermitian matrix:} A matrix \( M \) satisfying \( M^* = -M \).}
    \item \new{\textbf{Braided Hermitian matrix:} A matrix satisfying \( M = A + A^T i \) for some real matrix \( A \).}
    \item \new{\textbf{Braided Skew-Hermitian matrix:} A matrix satisfying \( M = A - A^T i \) for some real matrix \( A \).}
\end{itemize}

\section{Complex-Valued Hopfield Neural Networks} \label{sec:CvHNNs}

This section briefly reviews CvHNNs, followed by a study on models with a skew-Hermitian weight matrix. Theorems regarding the convergence of both models are presented.

As far as we know, the first complex-valued Hopfield neural networks were investigated by Noest in the late 1980s \cite{noest88a,noest88b}. In contrast to the previous works based on the phase of a complex number, one of us proposed CvHNN with the split sign function as a natural extension of the bipolar HNN to the complex domain \cite{murthy2004complex}. Despite their simplicity, split activation functions provide universal approximation capabilities for complex-valued and high-dimensional hypercomplex-valued neural networks, such as those based on quaternions or Clifford algebras \cite{arena1997nn,arena1998book,Buchholz2008OnPerceptrons,valle2024nn}. In this work, we will focus on CvHNNs that utilize the split-sign activation function.

Let us denote a complex number by $z = a+b\ii$, where $\ii$ denotes the imaginary unit $\ii^2 = -1$, $a$ is the real part, and $b$ is the imaginary part. The functions $\re{\cdot}$ and $\im{\cdot}$ yield respectively the real part and the imaginary part of its argument, that is, $\re{z}=a$ and $\im{z} = b$ for $z= a+b\ii$.

As the name suggests, synaptic weights, thresholds, and the neuron states of complex-valued Hopfield neural networks are complex numbers. Analogous to a regular (real-valued) HNN, a complex-valued model can be viewed as a weighted directed graph with neurons represented by the nodes and synaptic weights of connections associated with the respective edges. 

Consider a network with $N$ neurons. Let us arrange the synaptic weights $M_{ij}$'s in a matrix $M \in \mathbb{C}^{N\times N}$ and the neuron thresholds $T_i$'s in a vector $T \in \mathbb{C}^{N}$. For simplicity, we express the CvHNN as $R=(M,T)$. The state of the network at the time step $t$ is given by a vector $S(t)$, where the states of a neuron belongs to the set \begin{equation} \label{eq:split-states}
\mathcal{S} = \lbrace +1+\ii, +1-\ii, -1+\ii, -1-\ii \rbrace.
\end{equation}
The state of the $i$th neuron at time $t$ is given by the $i$th entry of the vector $S(t)$, which is represented by $S_i(t)$ for simplicity. 

The update rule for complex neurons is similar to that of real neurons. Formally, inspired by the update scheme by Bruck and Goodman \cite{DBLP:journals/tit/BruckG88}, the $i$th neuron is updated as follows
\begin{equation}
    \label{eq:update}
    S_i(t+1) = \sign \left(\sum_{j=1}^N M_{ij}S_j(t) - T_i \right), \quad \forall i=1,\ldots,N.
\end{equation}
The split-sign complex-valued function $\sign:\mathbb{C} \to \mathcal{S}$ is obtained by applying separately the real-valued sign function at the real and imaginary parts of its argument, that is, 
\begin{equation} \label{eq:split-sign}
\sign(a+b \ii) = \sign_{\mathbb{R}}(a) + \sign_{\mathbb{R}}(b) \ii,
\end{equation}
where $\sign_{\mathbb{R}}:\mathbb{R} \to \{-1,+1\}$ is the real-valued sign function defined by
\begin{equation}
 \sign_{\mathbb{R}}(x) = \begin{cases}
   +1, & x \geq 0, \\
   -1, & x<0.
 \end{cases}
\end{equation}
The update rule given by \eqref{eq:update} is usually applied in one of two manners based on the current state of the network:
\begin{itemize}
    \item \textbf{Serial mode:} At each step a single neuron in updated;
    \item \textbf{(Fully) Parallel mode:} All neurons are updated simultaneously.
\end{itemize}
Because all neurons are updated simultaneously in the parallel mode, it is also called the \textbf{synchronous} update. By contrast, the serial update mode is also known as \textbf{asynchronous} update.

We say that a network $R=(M,T)$ stabilizes at a \textbf{cycle of length $L$}, where $L$ is a positive integer if there exists $t_0 \geq 0$ such that $S(t+L)=S(t)$ for all $t \geq t_0$, that is, the network returns to the same state after $L$ steps. Moreover, we say that the network $R$ converges to a \textbf{stable state} if it stabilizes at a cycle of length $L=1$. In other words, the network converges to a stable state if all the neurons remain in their state, that is, there exists $t_0\geq0$ such that $S(t+1)=S(t)$ for all $t \geq t_0$. 

The update mode plays a key role in the dynamics of Hopfield neural networks \cite{DBLP:journals/tit/BruckG88,Castro2018SomeNetworks,hopfield1982neural}. In fact, a network $R=(M, T)$ operating in serial mode may settle at a stable state, but the same network $R$ may not stabilize in parallel update mode.

One usually presents an energy function to analyze the dynamics of a recurrent neural network. Briefly, an energy function is a bounded real-valued mapping on the set of all network states. Moreover, it must decrease when evaluated on consecutive but different network states. The energy function often depends on the operation mode of the network \cite{murthynovel}. Inspired by the real-valued HNN  \cite{hopfield1982neural,Goles-Chacc1985DecreasingNetworks,DBLP:journals/tit/BruckG88},
we consider the following expressions as energy functions in this paper:
\begin{itemize}
    \item \textbf{Serial mode:}
    \begin{equation}
        \label{eq:energy_serial}
        E_S\big(S\big) = -\re{S^* MS - 2S^*T}.
    \end{equation}
    \item \textbf{Parallel mode:}
    \begin{equation}
        \label{eq:energy_parallel}
        E_P\big(S_1,S_2\big) = -\re{S_1^* M S_2 - (S_1+S_2)^*T}.
    \end{equation}
\end{itemize}
where $S,S_1,S_2 \in \mathbb{S}^N$ denote arbitrary states of the network and $S^*$ denotes the conjugate (Hermitian) transpose of $S$, that is, $S^* = \bar{S}^T$. In analyzing the dynamics of CvHNNs, one typically considers \( S = S(t) \) for the serial update mode, and uses \( S_1 = S(t) \) and \( S_2 = S(t-1) \) for the parallel update mode, where \( S(t) \) and \( S(t-1) \) represent consecutive states of the network.

We would like to remark that both \( E_S \) and \( E_P \) are bounded real-valued functions because the set of states in the complex-valued Hopfield neural network (HNN) is discrete. Additionally, the function \( E_P \) depends on two network states. However, \( E_P \) and \( E_S \) are equal when they are calculated for the same network state; specifically, \( E_P(S, S) = E_S(S) \) when \( S_1 = S_2 = S \). 

From \eqref{eq:energy_serial} and \eqref{eq:energy_parallel} it is clear that the form of the synaptic matrix directly impacts the calculation of the energy function. Therefore $M$ plays a key role in convergence theorems. The following addresses some particular forms for the complex-valued matrix $M$, including the Hermitian and skew-Hermitian. Two novel classes of synaptic weight matrices are introduced subsequently.

\subsection{CvHNNs with Hermitian weight matrix}

In the ordinary CvHNN, the synaptic weight matrix $M$ is Hermitian, i.e., $M^* = M$. Recall that the main diagonal elements are pure real numbers (i.e., complex numbers with null imaginary parts) in a Hermitian matrix. The following theorem addresses the dynamics of ordinary CvHNNs using either serial or parallel updates.
\begin{Theorem}
\label{thm:Hermitian}
 Consider a complex-valued Hopfield neural network $R=(M,T)$ where $M \in \mathbb{C}^{N \times N}$ is Hermitian. 
 \begin{enumerate}
     \item If $R$ is operated in serial mode and the main diagonal elements of $M$ are non-negative, the network always settles at a stable state.
     \item If $R$ is operated in parallel mode, it reaches a cycle of length $L \leq 2$, i.e., it either converges to a stable state or stabilizes at a cycle of length $2$.
 \end{enumerate}
\end{Theorem}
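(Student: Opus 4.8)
The plan is to establish both parts by showing that the quantities $E_S$ and $E_P$ from \eqref{eq:energy_serial} and \eqref{eq:energy_parallel} act as Lyapunov (energy) functions that cannot increase along the respective dynamics. Since the state set $\mathcal{S}^N$ is finite, each energy assumes only finitely many values and is bounded below; monotonicity then forces the energy to become constant after finitely many steps, and a careful analysis of the equality case pins down the limiting behaviour. The recurring elementary fact I would isolate first is that, because $\sign$ acts componentwise, $\re{\overline{\sign(z)}\,z} = |\re{z}| + |\im{z}|$ is the \emph{largest} value of $\re{\bar v z}$ over all admissible neuron states $v \in \mathcal{S}$; this maximality property drives the sign of the energy change in both modes.

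For Part 1 (serial mode), suppose neuron $k$ is updated, so the successor $S'$ differs from $S$ only in the $k$th entry, with $\Delta S_k = S'_k - S_k$. First I would expand $E_S(S') - E_S(S)$ and use the Hermitian identity $S^* M \Delta + \Delta^* M S = 2\re{S^* M \Delta}$ (valid because $M^*=M$) to collapse the cross terms into the compact formula $E_S(S') - E_S(S) = -2\re{\Delta S_k \bar h_k} - M_{kk}|\Delta S_k|^2$, where $h_k = \sum_j M_{kj}S_j - T_k$ is exactly the argument of $\sign$ in \eqref{eq:update} and $M_{kk}$ is real. Since $S'_k = \sign(h_k)$, the maximality property gives $\re{\Delta S_k \bar h_k} \ge 0$, and the hypothesis $M_{kk}\ge 0$ makes the second term nonpositive; hence $E_S(S')\le E_S(S)$, strictly so whenever the update changes the state and the relevant field component is nonzero, which yields convergence to a stable state.

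For Part 2 (parallel mode), I would track $E(t) := E_P(S(t),S(t-1))$ and compute $E(t+1)-E(t)$. Symmetrizing the mixed term $S(t)^* M S(t-1)$ via $M^*=M$ causes the threshold contributions to telescope, producing the clean identity $E(t+1)-E(t) = -\re{(S(t+1)-S(t-1))^* \Phi(t)}$, where $\Phi(t) = MS(t)-T$ is the field vector and $S(t+1)=\sign(\Phi(t))$. The maximality property now gives $\re{S(t+1)^*\Phi(t)} \ge \re{v^*\Phi(t)}$ for every admissible $v$; taking $v = S(t-1)$ yields $E(t+1)\le E(t)$. Boundedness plus monotonicity forces $E$ to stabilize, and in the equality case the maximizer property forces $S(t+1)$ and $S(t-1)$ to agree on every coordinate whose field component is nonzero, so that $S(t+1)=S(t-1)$ and the trajectory is eventually $2$-periodic, i.e.\ $L\le 2$.

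The hard part will be the boundary (tie) case, where a real or imaginary field component equals exactly zero: there the convention $\sign_{\mathbb{R}}(0)=+1$ can make the serial energy difference vanish even though a neuron flips, and can leave the parallel equality analysis unable to conclude $S(t+1)=S(t-1)$ directly. I would handle this by noting that such zero-energy flips can only move a component from $-1$ to $+1$ and leave the neuron consistent with its own field, so they cannot be reversed without a strict energy decrease; since strict decreases are finite, only finitely many ties occur, and the finite, bounded, non-increasing energy must therefore settle — at a stable state in serial mode and at a cycle of length at most two in parallel mode.
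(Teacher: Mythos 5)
Your proof is correct, and it is self-contained where the paper's is not; the two differ mainly in the parallel case. For the serial part you follow essentially the route the paper indicates: monotonicity of the energy $E_S$ in \eqref{eq:energy_serial} under single-neuron updates, which the paper does not spell out but attributes to \cite{murthy2008multidimensional}; your expansion $E_S(S')-E_S(S)=-2\re{\Delta S_k \bar{h}_k}-M_{kk}|\Delta S_k|^2$ together with the maximality property of the split sign function is precisely that argument, with the hypothesis $M_{kk}\geq 0$ entering exactly where it should. For the parallel part, however, the paper never touches $E_P$: it converts the CvHNN into a real-valued synchronous HNN on $\{-1,+1\}^{2N}$ with weight matrix $W=\left[\begin{smallmatrix} A & -B \\ B & A \end{smallmatrix}\right]$ (the same device used in the proof of Theorem \ref{thm:Skew-Hermitian}), observes that Hermiticity of $M=A+B\ii$ (i.e., $A^T=A$ and $B^T=-B$) makes $W$ symmetric, and then quotes the classical real-valued results \cite{DBLP:journals/tit/BruckG88,Goles-Chacc1985DecreasingNetworks} to conclude $L\leq 2$. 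Your direct Lyapunov argument on $E_P(S(t),S(t-1))$, via the identity $E(t+1)-E(t)=-\re{(S(t+1)-S(t-1))^*\Phi(t)}$ with $\Phi(t)=MS(t)-T$, is the complex-valued analogue of what happens inside those cited theorems rather than an appeal to them. The paper's reduction buys brevity and reuse of established results; your approach buys a proof that works entirely in the complex domain and, importantly, makes explicit how the tie-breaking convention $\sign_{\mathbb{R}}(0)=+1$ is handled: your observation that zero-energy changes can only move a component from $-1$ to $+1$ and cannot be undone without a strict energy decrease is exactly the argument needed to rule out constant-energy cycling (applied per parity class of time steps in the parallel case), a detail the reduction silently delegates to the hypotheses of the real-valued theorems.
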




\new{
The above result for serial mode operation uses the energy defined in \eqref{eq:energy_serial} and has been proved in \cite{murthy2008multidimensional}. The parallel case involves transforming the CvHNN into a real-valued model with a symmetric weight matrix, similar to the approach we will use in the proof of Theorem \ref{thm:braided}. We then apply the results for the real-valued case as detailed in \cite{DBLP:journals/tit/BruckG88} or \cite{Goles-Chacc1985DecreasingNetworks}.}

\subsection{CvHNN with skew-Hermitian weight matrix}

Consider a structured CvHNN with a skew-Hermitian matrix $M \in \mathbb{C}^{N \times N}$, the complex-valued analogous to the antisymmetric real matrix concept. Recall that $M\in \mathbb{C}^{N \times N}$ is a skew-Hermitian matrix if $M^* = -M$. As a consequence, the elements in the main diagonal of $M$ must be pure imaginary numbers. 

In the real-valued case, such configuration leads to a cycle of length $4$ when operated in parallel mode \cite{DBLP:journals/dam/Goles86}. The following theorem, whose proof was firstly outlined in \cite{Garimella_2016}, provides an equivalent result for the complex-valued structured HNN.


\begin{Theorem} \label{thm:Skew-Hermitian}
Let $R=(M,0)$ be a complex-valued Hopfield neural network where $M$ is a skew-Hermitian matrix, and there are no thresholds, that is, $T_i=0$ for all $i=1,\ldots,N$. For $R$ operating in parallel mode, the network stabilizes in a cycle of length $L = 4$. 
\end{Theorem}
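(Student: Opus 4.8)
The plan is to adapt Goles' energy argument for real antisymmetric parallel networks \cite{DBLP:journals/dam/Goles86} to the complex setting, working directly with the parallel energy $E_P$ from \eqref{eq:energy_parallel} specialized to $T=0$. Two observations drive the proof. First, the split-sign update forces, for every neuron $i$, the pre-activation $w_i := (MS(t))_i = p_i + q_i\ii$ (with $p_i,q_i\in\mathbb{R}$) to satisfy $\re{\overline{S_i(t+1)}\,w_i} = |p_i|+|q_i| \ge 0$, since $S_i(t+1)=\sign(w_i)$ aligns the signs of its real and imaginary parts with those of $w_i$; summing over $i$ gives $\re{S(t+1)^* M S(t)} \ge 0$. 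Second, skew-Hermiticity $M^*=-M$ yields the antisymmetry-type identity $\re{A^* M B} = -\re{B^* M A}$ for any states $A,B$, because $\overline{A^* M B} = B^* M^* A = -B^* M A$ and $\re{z}=\re{\bar z}$.

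Using these facts, I would set $E(t) := E_P\big(S(t),S(t-1)\big) = -\re{S(t)^* M S(t-1)}$ and compute the one-step change. Rewriting $\re{S(t)^* M S(t-1)} = -\re{S(t-1)^* M S(t)}$ via the antisymmetry identity gives $E(t+1)-E(t) = -\re{(S(t+1)+S(t-1))^* M S(t)}$. Bounding each neuron's contribution, the $S(t+1)$ part equals $|p_i|+|q_i|$ while the $S(t-1)$ part is at least $-(|p_i|+|q_i|)$, so every term is nonnegative and $E$ is non-increasing. Because the state set $\mathcal{S}^N$ is finite, $E$ is bounded below and must become constant after finitely many steps.

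Next I would extract the period-$4$ structure from the stationary regime. Once $E(t+1)=E(t)$, each neuron's contribution vanishes; for every $i$ with $p_i,q_i\neq 0$ this forces the real and imaginary parts of $S_i(t-1)$ to be opposite to those of $S_i(t+1)=\sign(w_i)$, i.e.\ $S_i(t-1)=-S_i(t+1)$. Hence $S(t+1)=-S(t-1)$ throughout, which gives $S(t+2)=-S(t)$ and therefore $S(t+4)=S(t)$, a cycle of length dividing $4$ (length exactly $4$ in the generic, non-degenerate case). Equivalently, one may reach the same conclusion by writing $M = A + B\ii$ with $A$ real antisymmetric and $B$ real symmetric, stacking the real and imaginary parts of the state into $u=(\re{S},\im{S})\in\mathbb{R}^{2N}$, and checking that the induced real block matrix $\left(\begin{smallmatrix} A & -B \\ B & A \end{smallmatrix}\right)$ is antisymmetric, so that the real-valued result applies directly; this mirrors the reduction announced for Theorem \ref{thm:braided}.

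The main obstacle will be the tie case, namely neurons for which $p_i=0$ or $q_i=0$, where the convention $\sign_{\mathbb{R}}(0)=+1$ breaks the clean antisymmetry and blocks the inference $S_i(t-1)=-S_i(t+1)$ from a vanishing contribution. I expect to dispose of this either by imposing a nondegeneracy hypothesis on $M$ guaranteeing that the pre-activations never vanish, or by a more careful accounting showing that the tied coordinates cannot sustain a period longer than $4$. Verifying this edge case rigorously is the delicate part, whereas the monotone-energy skeleton above is routine.
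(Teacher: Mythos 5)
Your closing ``equivalently'' remark is, in fact, precisely the paper's proof: write $M=A+B\ii$, note that $M^*=-M$ forces $A^T=-A$ and $B^T=B$, stack the real and imaginary parts of the state into a vector of $\mathbb{R}^{2N}$, verify that the induced weight matrix $W=\bigl[\begin{smallmatrix}A&-B\\ B&A\end{smallmatrix}\bigr]$ satisfies $W^T=-W$, and invoke Goles' theorem for synchronous real antisymmetric networks \cite{DBLP:journals/dam/Goles86}. So your proposal contains the paper's argument as its fallback, and your primary route is genuinely different: instead of citing Goles as a black box, you transplant his energy argument into $\mathbb{C}$, using $E(t)=-\re{S(t)^*MS(t-1)}$, the alignment identity $\re{\overline{S_i(t+1)}\,w_i}=|p_i|+|q_i|$, and the skew-Hermitian identity $\re{A^*MB}=-\re{B^*MA}$. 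All three ingredients are correct, the monotonicity computation is sound, and finiteness of the state space does make $E$ eventually constant; what this buys is a self-contained proof that never leaves the complex domain and exposes exactly where the structure of $M$ is used.

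The tie case you flag, however, is not a delicate formality to be cleaned up later: it is a genuine counterexample to the statement as written, for your energy route and for the paper's reduction alike. Take
\begin{equation*}
M=\begin{pmatrix} 0 & 1 & -1\\ -1 & 0 & 1\\ 1 & -1 & 0\end{pmatrix},\qquad T=0,\qquad S(0)=(1+\ii,\;1+\ii,\;-1-\ii),
\end{equation*}
where $M$ is real antisymmetric and hence skew-Hermitian. Because $\sign_{\mathbb{R}}(0)=+1$, a direct check gives $S(0)\to(1+\ii,-1-\ii,1+\ii)\to(-1-\ii,1+\ii,1+\ii)\to S(0)$, a cycle of minimal length $3$; consequently $S(t+4)=S(t+1)\neq S(t)$ for all $t$, and the network never satisfies the conclusion of the theorem. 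The failure occurs exactly at the coordinates with vanishing pre-activation, where your inference $S_i(t-1)=-S_i(t+1)$ breaks; and since this $M$ is already real, reducing to the real case cannot dispose of the problem---the real-valued result being cited must itself exclude such degeneracies by hypothesis or by a different tie convention. So your instinct, and your proposed fix, are correct: under a nondegeneracy hypothesis on $M$ (no pre-activation $(MS)_i$ has zero real or imaginary part, over all states $S\in\mathcal{S}^N$), your energy argument closes and yields $S(t+2)=-S(t)$, hence $S(t+4)=S(t)$; without such a hypothesis the statement itself is false. In this one respect your write-up is more careful than the paper, whose proof inherits the same gap silently through the citation.
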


\begin{proof}
We prove by converting the CvHNN into a real-valued model and, then, using the results from \cite{DBLP:journals/dam/Goles86}. By hypothesis, the threshold vector is the null vector, i.e., $T=0$. The weight matrix and the network's state vector can be written respectively as follows:
\begin{equation*}
    M = A + B\ii \quad \mbox{and} \quad S(t)  = X(t) + Y(t) \ii. 
\end{equation*}
From \eqref{eq:update} and \eqref{eq:split-sign}, the real and the imaginary parts of the $i$th neuron are updated as follows, respectively:
\begin{equation*}
    X_i(t+1) = \sign_{\mathbb{R}} \left(
    \sum_{j=1}^N \left(A_{ij} X_j(t) - B_{ij} Y_j(t)\right) \right),
\end{equation*}
and
\begin{equation*}
    Y_i(t+1) = \sign_{\mathbb{R}} \left(
    \sum_{j=1}^N \left(B_{ij} X_j(t) + A_{ij} Y_j(t)\right) \right).
\end{equation*}
Alternatively, using a matrix notation we have
\begin{equation*}
    \begin{bmatrix}
    X(t+1) \\ Y(t+1)
    \end{bmatrix} = 
    \sign_{\mathbb{R}} \left( \begin{bmatrix}
    A & -B \\ B & A
    \end{bmatrix} \begin{bmatrix}
    X(t) \\ Y(t)
    \end{bmatrix} \right),
\end{equation*}
where $\sign_{\mathbb{R}}:\mathbb{R} \to \{-1,+1\}$ is evaluated in a component-wise manner. Therefore, the CvHNN $R=(M,0)$ operating in parallel mode corresponds to a synchronous real-valued HNN with weight matrix 
\begin{equation*}
    W = \begin{bmatrix}
    A & -B \\ B & A
    \end{bmatrix}.
\end{equation*}
Now, if $M = A + B\ii$ is skew-Hermitian, then $A^T = -A$ and $B^T = B$. As a consequence, the weight matrix $W$ of the real-valued HNN satisfies
\begin{equation*}
    W^T = \begin{bmatrix}
    A & -B \\ B & A
    \end{bmatrix}^T
    = \begin{bmatrix}
    A^T & B^T \\ -B^T & A^T
    \end{bmatrix}
    = \begin{bmatrix}
    -A & B \\ -B & -A
    \end{bmatrix}
    = -\begin{bmatrix}
    A & -B \\ B & A
    \end{bmatrix} = - W.
\end{equation*}
In other words, $W$ is anti-symmetric. According to Goles \cite{DBLP:journals/dam/Goles86}, the synchronous real-valued HNN and, equivalently, the CvHNN $R=(M,0)$ operating in parallel, stabilizes at a cycle of length $L=4$.
\end{proof}

\begin{Example} \label{ex:skew-Hermitian}
A synchronous CvHNN with a skew-Hermitian matrix may stabilize in cycles of different lengths if the threshold is not the null vector. Figure \ref{fig:skew-Hermitian} illustrates this remark by showing the cycle length probability of synchronous CvHNN with and without threshold vector. Precisely, we generated 40,000 CvHNN models with skew-Hermitian weight matrices and estimated the cycle length by probing them with a random input vector. The number of neurons ranged from 5 to 70. The real and imaginary parts of the weights have been drawn from a uniform distribution on $[-1,1]$ while the non-null thresholds are uniformly distributed on the interval $[-N,N]$. Note from Figure \ref{fig:skew-Hermitian} that the CvHNN with arbitrary thresholds stabilizes at cycles of various lengths in addition to the cycles of length 4.
\begin{figure}[t]
    \centering
    \begin{tabular}{cc}
    a) $T=0$ & b) $T$ arbitrary \\
    \includegraphics[width=0.45\columnwidth]{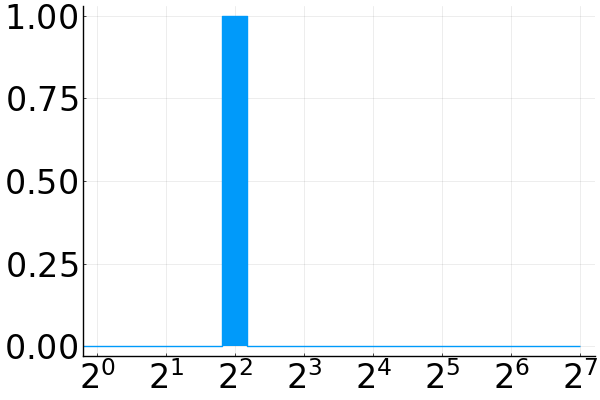} &
    \includegraphics[width=0.45\columnwidth]{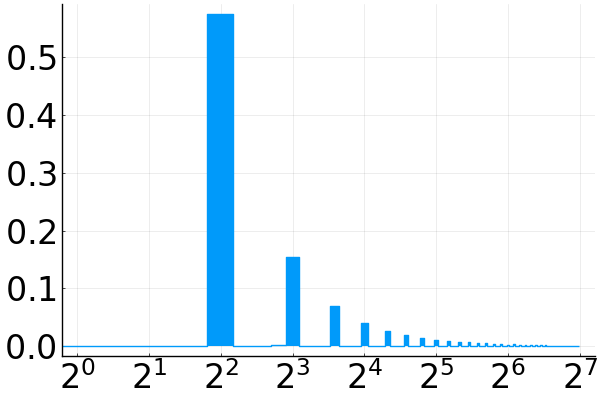} \end{tabular}
    \caption{Probability of cycle length in synchronous CvHNNs with a skew-Hermitian weight matrix and either null thresholds ($T=0$) or random arbitrary thresholds ($T$).}
    \label{fig:skew-Hermitian}
\end{figure}
\end{Example}

\subsection{CvHNNs with braided weight matrices} \label{sec:4}

Let us begin by reviewing two types of complex matrices that were introduced in our conference paper \cite{Garimella_2016}. After that, we present a theorem that addresses the dynamics of the CvHNN when the synaptic matrix assumes either of these forms.

\begin{Definition}[Braided Hermitian Matrix] \label{def:braided}
A complex-valued matrix $M$ is said to be \textbf{braided Hermitian} if $M = A + A^T \ii$ for some square real matrix $A$.
\end{Definition} 

For example, consider the real-valued matrix
\begin{equation}
\label{eq:matrix-A}
    A = \begin{bmatrix}
    -3 & -2 & \phantom{-}1 & \phantom{-}5 \\
     \phantom{-}4 & \phantom{-}6 & -1 & -4 \\
     \phantom{-}8 & -2 & \phantom{-}7 & 10 \\
    -7 & \phantom{-}6 & \phantom{-}2 & \phantom{-}5 \\
    \end{bmatrix}.
\end{equation}
The associated braided Hermitian matrix is
\begin{equation*}
    M = \begin{bmatrix}
     -3-3\ii &-2+4\ii \phantom{-}&1+8\ii &\phantom{-}5-7\ii \\
 \phantom{-}4-2\ii &\phantom{-}6+6\ii &-1-2\ii &-4+6\ii \\
  \phantom{-}8+1\ii &-2-1\ii &\phantom{-}7+7\ii &10+2\ii \\
 -7+5\ii &\phantom{-}6-4\ii &2+10\ii &\phantom{-}5+5\ii \\
    \end{bmatrix}.
\end{equation*}
From Definition \ref{def:braided}, we note that a braided Hermitian matrix is such that
$$ M^T = \ii \overline{M}. $$

\begin{Definition}[Braided Skew-Hermitian Matrix] \label{def:skew-braided}
A complex-valued matrix $M$ is \textbf{braided skew-Hermitian} if it satisfies $M= A - A^T \ii$ for some square real matrix $A$. 
\end{Definition}

For example, the braided skew-Hermitian matrix derived from the real-valued $A$ given by \eqref{eq:matrix-A} is
\begin{equation*}
   M = \begin{bmatrix}
 -3+3\ii &-2-4\ii &\phantom{-}1-8\ii &\phantom{-}5+7\ii \\
 \phantom{-}4+2\ii &\phantom{-}6-6\ii &-1+2\ii &-4-6\ii \\
  \phantom{-}8-1\ii &-2+1\ii &\phantom{-}7-7\ii &10-2\ii \\
 -7-5\ii & \phantom{-}6+4\ii \phantom{-}&2-10\ii &5-5\ii \\
\end{bmatrix}.
\end{equation*}
Note that braided skew-Hermitian matrices have the property:
$$ M^T = - \ii \overline{M}. $$

With these two definitions in hand, we have the following theorem concerning the dynamics of complex-valued Hopfield neural networks:
\begin{Theorem} 
\label{thm:braided}
Let $R=(M,0)$ be a CvHNN with no thresholds, i.e., $T=0$. If $M$ is braided Hermitian or braided skew-Hermitian, the network converges to a cycle of length $8$ when operated in parallel mode.
\end{Theorem}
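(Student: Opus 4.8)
The plan is to follow the strategy of Theorem~\ref{thm:Skew-Hermitian}: reduce the parallel CvHNN to an equivalent real-valued synchronous network and then read off the cycle length from the structure that braiding imposes on the real weight matrix. Writing $M = A + B\ii$ and $S(t) = X(t) + Y(t)\ii$, the update rule \eqref{eq:update} with $T = 0$ is equivalent to the $2N$-dimensional real iteration governed by $W = \begin{bmatrix} A & -B \\ B & A \end{bmatrix}$, exactly as in that proof. For a braided Hermitian matrix I would substitute $B = A^T$ (Definition~\ref{def:braided}), and for a braided skew-Hermitian matrix $B = -A^T$ (Definition~\ref{def:skew-braided}).

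The decisive step is to extract the symmetry class of $W$. Introducing $J = \begin{bmatrix} 0 & I \\ -I & 0 \end{bmatrix}$, the real representation of multiplication by $\ii$ (so that $J^2 = -I$ and $WJ = JW$), a direct block computation shows $W^T = JW$ in the braided Hermitian case and $W^T = -JW$ in the braided skew-Hermitian case; equivalently, $M^* = -\ii M$ and $M^* = \ii M$. In contrast to the skew-Hermitian case, where $W$ is itself antisymmetric, here $W$ is neither symmetric nor antisymmetric. However, both identities give $(W^2)^T = (W^T)^2 = J^2 W^2 = -W^2$, so $W^2$ is antisymmetric. Heuristically, an antisymmetric $W$ acts as a rotation by $90^\circ$ and yields the length-$4$ cycle of Theorem~\ref{thm:Skew-Hermitian}, whereas $W^2$ antisymmetric means $W$ acts as a rotation by $45^\circ$, whose natural period is $8$; this is the origin of the claimed cycle length.

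To turn this into a proof I would combine two ingredients. First, a Lyapunov argument: take the parallel energy $E(t) = -\re{S(t)^* M S(t-1)}$ from \eqref{eq:energy_parallel}; since $S(t+1) = \sign(M S(t))$ maximizes $\re{U^* M S(t)}$ over all admissible states $U \in \mathcal{S}^N$, and since the four states $\ii^k S(t-1)$, $k = 0,1,2,3$, are admissible because multiplication by $\ii$ permutes $\mathcal{S}$, testing these four vectors shows that $\re{S(t+1)^* M S(t)} \ge \max\{|\re{c}|, |\im{c}|\}$ with $c = S(t)^* M S(t-1)$. In particular $E$ is non-increasing and bounded, so the orbit is eventually periodic. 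Second, the equivariance $\sign(\ii z) = \ii \sign(z)$ (valid away from the coordinate axes) shows that the one-step map commutes with multiplication by $\ii$, which, together with the antisymmetry of $W^2$, is what I expect to force the orbit to close after exactly eight steps.

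The main obstacle is pinning the period to $8$ rather than merely bounding it. The energy argument alone cannot do this: the same bound $\re{S(t+1)^* M S(t)} \ge \max\{|\re{c}|,|\im{c}|\}$ holds verbatim for skew-Hermitian matrices, which produce length-$4$ cycles, so the distinction between $4$ and $8$ lives entirely in the finer rotation-by-$45^\circ$ structure rather than in the energy. Exploiting that structure is delicate because the $45^\circ$ rotation relating $M$ to a skew-Hermitian matrix does not commute with the split-sign activation, and the boundary convention $\sign_{\mathbb{R}}(0) = +1$ must be handled so that the equivariance $\sign(\ii z) = \ii \sign(z)$ is not violated on the axes. I expect this case analysis, rather than the reduction itself, to be the technical heart of the argument.
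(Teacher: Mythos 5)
Your setup is correct as far as it goes: the reduction to the real synchronous network with weight matrix $W=\begin{bmatrix} A & -B\\ B & A\end{bmatrix}$, the identities $W^T=JW$ (braided Hermitian), $W^T=-JW$ (braided skew-Hermitian), equivalently $M^*=-\ii M$ and $M^*=\ii M$, and the antisymmetry of $W^2$ all check out, and this reduction is the same first step the paper alludes to (the paper itself gives no proof here, deferring entirely to the conference version \cite{Garimella_2016}). The genuine gap is that nothing in your proposal actually establishes the cycle length $8$. The heuristic ``$W^2$ antisymmetric, hence $W$ is a $45^\circ$ rotation, hence period $8$'' cannot be converted into an application of Goles' theorem \cite{DBLP:journals/dam/Goles86}, because the two-step dynamics is $Z\mapsto \sign_{\mathbb{R}}\bigl(W\,\sign_{\mathbb{R}}(WZ)\bigr)$ and not $Z\mapsto\sign_{\mathbb{R}}(W^2Z)$: the split-sign nonlinearity does not commute with matrix composition, so antisymmetry of $W^2$ by itself yields nothing about the iterated map. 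You then explicitly defer ``the technical heart'' to an unspecified case analysis; that deferred step is the theorem.

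Moreover, your diagnosis of the obstacle is mistaken in an instructive way: the energy argument \emph{can} pin down the period, because although the inequality $\re{S(t+1)^*MS(t)}\ge\max\{|\re{c}|,|\im{c}|\}$, $c=S(t)^*MS(t-1)$, is the same for skew-Hermitian and braided matrices, the \emph{equality case} is not, and that is exactly where $4$ versus $8$ is decided. For braided Hermitian $M$ one has $M^*=-\ii M$, hence $S(t-1)^*MS(t)=\ii\bar{c}$, so among your four admissible test states $\ii^kS(t-1)$ it is $k=1$, i.e.\ $\ii S(t-1)$, whose overlap $\re{(\ii S(t-1))^*MS(t)}$ equals $\re{c}$; for skew-Hermitian $M$ ($M^*=-M$) the corresponding state is $k=2$, i.e.\ $-S(t-1)$. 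Since $\re{S(t+1)^*MS(t)}$ is non-decreasing and takes finitely many values, it is eventually constant, and then, in the non-degenerate case where no entry of $MS(t)$ has vanishing real or imaginary part (so the maximizer of $\re{U^*MS(t)}$ over $U\in\mathcal{S}^N$ is unique), equality forces $S(t+1)=\ii S(t-1)$, i.e.\ $S(t+2)=\ii S(t)$ eventually. This gives $S(t+8)=\ii^4S(t)=S(t)$ and excludes periods $1$, $2$, and $4$ because $\ii S(t)\neq S(t)$ and $-S(t)\neq S(t)$; the braided skew-Hermitian case is identical with $-\ii$ in place of $\ii$, while the skew-Hermitian case reproduces Theorem \ref{thm:Skew-Hermitian}'s period $4$ from $S(t+1)=-S(t-1)$. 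So your own machinery completes the proof once the equality analysis is carried out; the only genuinely delicate residue is the degenerate situation with zero components, where $\sign_{\mathbb{R}}(0)=+1$ breaks both the $\ii$-equivariance and the uniqueness of the maximizer.
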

The proof Theorem \ref{thm:braided} can be found in our conference paper \cite{Garimella_2016}. Like the synchronous CvHNN with a skew-Hermitian matrix, the CvHNN $R=(M,T)$ with either a braided Hermitian or a braided skew-Hermitian weight matrix may stabilize in cycles of different lengths if the threshold is not the null vector. The following example illustrates this remark.

\begin{Example} \label{ex:braided}
Figure \ref{fig:braided} shows the histograms of the synchronous CvHNN cycle lengths with braided Hermitian and braided skew-Hermitian matrices, both with and without threshold vector. Precisely, we generated 40,000 CvHNN models with braided Hermitian and skew-Hermitian weight matrices and estimated the cycle length by probing them with a random input vector. The number of neurons ranged from 5 to 70. The real and imaginary parts of the weights have been drawn from a uniform distribution on $[-1,1]$ while the non-null thresholds are uniformly distributed on the interval $[-N,N]$. \new{In Figure \ref{fig:braided}, the x-axis represents the cycle length, while the y-axis indicates the probability.} Note that the CvHNN models with arbitrary thresholds stabilize at cycles of many different lengths besides the cycles of length $L=8$.
\begin{figure}[t]
    \centering
    \begin{tabular}{cc}
    \multicolumn{2}{c}{\underline{Braided Hermitian}} \\
    a) $T=0$ & b) $T$ arbitrary \\
    \includegraphics[width=0.45\columnwidth]{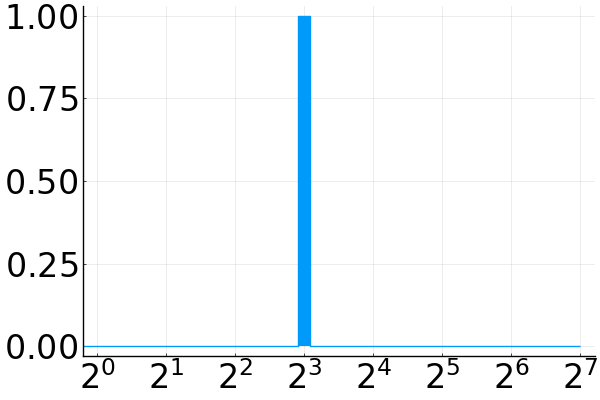} &
    \includegraphics[width=0.45\columnwidth]{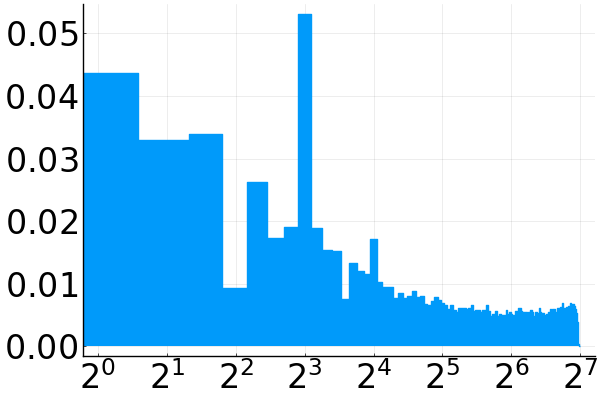} \\
    \multicolumn{2}{c}{\underline{Braided Skew-Hermitian}} \\
    c) $T=0$ & d) $T$ arbitrary \\
    \includegraphics[width=0.45\columnwidth]{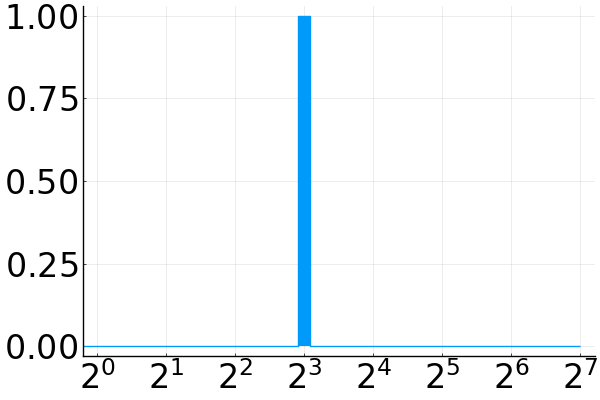} &
    \includegraphics[width=0.45\columnwidth]{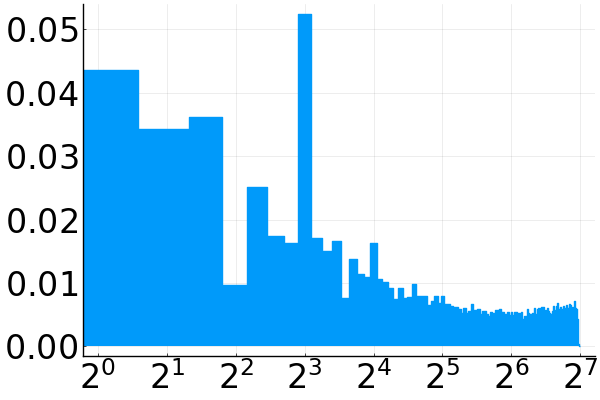}
    \end{tabular}
    \caption{\new{Probability of cycle length in synchronous CvHNNs with a braided Hermitian and braided skew-Hermitian weight matrix, considering both null thresholds ($T=0$) and random thresholds ($T$ arbitrary).}}
    \label{fig:braided}
\end{figure}
\end{Example}

\section{Computational Experiments with Synchronous Complex-valued Hopfield Neural Networks} \label{sec:experiments}

This section presents the results of computational simulations using various structured CvHNN models operating synchronously, that is, using parallel updates. The primary objective of these experiments is to assess the existence of generalized convergence results for specific complex-valued synaptic weight matrices. Because the threshold vector may alter the cycle length (cf. Example \ref{ex:skew-Hermitian}), we refrain from including it. In other words, we consider $T=0$ in the following computational experiments.

The experiments are conducted based on the structure of the weight matrix \( M \) in a complex-valued Hopfield neural network model, denoted as \( R = (M, 0) \). To explore the dynamics of these CvHNN models, we represent the weight matrix \( M \) as \( M = A + B \ii \), where \( A \) and \( B \) are real matrices. In our discussion, we will examine the results based on the characteristics of \( A \) and \( B \). Recall that we presented theorems related to two forms of \( M \) in Section \ref{sec:CvHNNs}. Namely:
\begin{enumerate}
    \item $M$ Hermitian (Theorem \ref{thm:Hermitian}), which implies $A$ symmetric and $B$ antisymmetric;
    \item $M$ skew-Hermitian (Theorem \ref{thm:Skew-Hermitian}), which implies $A$ antisymmetric and $B$ symmetric.
    \item $M$ braided Hermitian, which implies $A$ arbitrary and $B=A^T$.
    \item $M$ skew-braided Hermitian, which implies $A$ arbitrary and $B=-A^T$.
\end{enumerate}
In this section, we explore several additional configurations. The real-valued matrices \( A \) and \( B \) can take one of three forms:
\begin{enumerate}
    \item \textbf{Symmetric:} In this case, \( A^T = A \) or \( B^T = B \).
    \item \textbf{Antisymmetric:} Here, \( A^T = -A \) or \( B^T = -B \).
    \item \textbf{Arbitrary}: In this scenario, there are no constraints imposed on \( A \) or \( B \).
\end{enumerate}
This results in a total of nine possible combinations, excluding the cases addressed previously.

As in Examples \ref{ex:skew-Hermitian} and \ref{ex:braided}, we randomly generate 40,000 instances for each configuration. Each instance consists of a pair of matrices $A$ and $B$ and an initial state $S(0)$. The entries of $A$ and $B$ and the components of the initial state $S(0)$ have all been drawn from a uniform distribution. Precisely, the matrices $A$ and $B$ have been obtained according to one of the following cases:
\begin{itemize}
    \item \textbf{Positive}, when the entries in the upper triangular part belong to the interval $[0,1]$.
    \item \textbf{Negative}, when the entries in the upper triangular part belong to the interval $[-1,0]$.
    \item \textbf{Arbitrary}, in which the entries in the upper triangular part belong to $[-1,1]$.
\end{itemize}
Furthermore, the number of neurons $5\leq N \leq 70$ is randomly chosen in each instance. Additional conditions regarding the matrices are detailed below. In the following computational experiments, the CvHNN models are all operated in parallel mode, and the outcome is classified according to the length of the $L$ of the observed cycles. \new{Each configuration is accompanied by a brief discussion, while a summary of the results is provided in Subsection \ref{ssec:summary}.}

\subsection{A is symmetric and B is symmetric}

Let us begin by considering that $A$ and $B$ are symmetric matrices. In this case, the complex-valued weight matrix $M$ is also symmetric. 
Figure \ref{fig:SymSym} shows the histograms of the cycle length of the resulting CvHNN models. This figure also shows the probability of the most probable cycle length.

\begin{figure}[t]
    \begin{tabular}{ll|ccc}
    && \multicolumn{3}{c}{\underline{Symmetric matrix $A$}} \\ 
    && \small{Positive} & \small{Negative} & \small{Arbitrary} \\ \hline
    && \small{a) $\Pr[L=8]=0.95$} 
    & \small{b) $\Pr[L=8]=0.94$} 
    & \small{c) $\Pr[L=4]=0.98$} \\
    \multirow{3}{*}{\hfill \rotatebox[origin=lB]{90}{\underline{Symmetric matrix $B$}}} & \rotatebox{90}{$\quad$ \small{Positive}} & 
    \includegraphics[width=0.26\columnwidth]{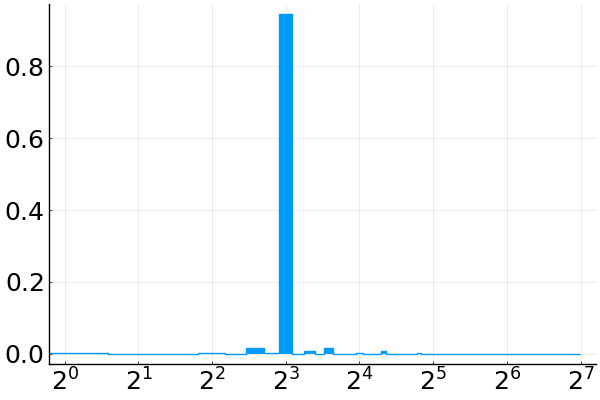} &
    \includegraphics[width=0.26\columnwidth]{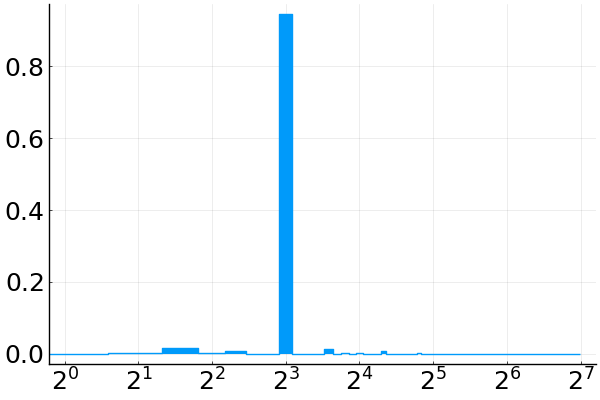} &
    \includegraphics[width=0.26\columnwidth]{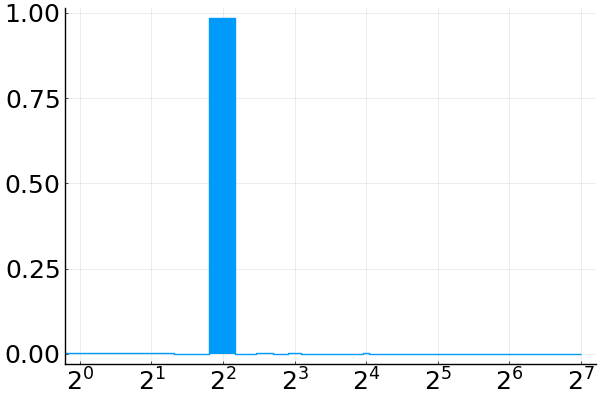} \\
    && \small{d) $\Pr[L=8]=0.95$} 
    & \small{e) $\Pr[L=8]=0.95$} 
    & \small{f) $\Pr[L=4]=0.98$} \\
    & \rotatebox{90}{$\quad$ \small{Negative}}  & 
    \includegraphics[width=0.26\columnwidth]{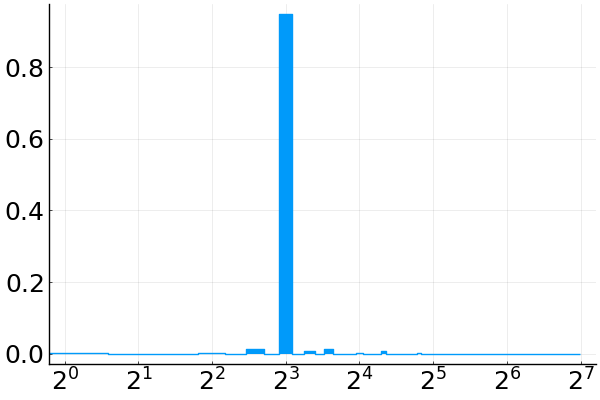} &
    \includegraphics[width=0.26\columnwidth]{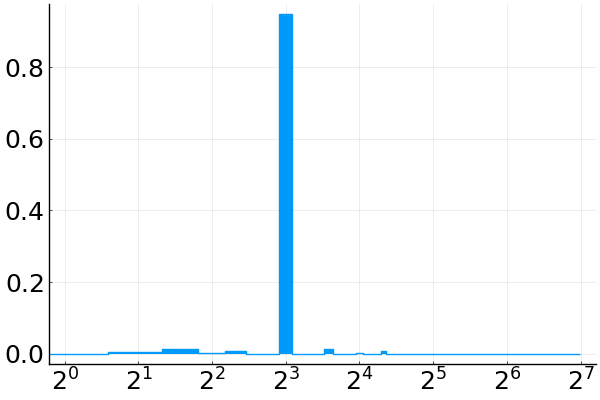} &
    \includegraphics[width=0.26\columnwidth]{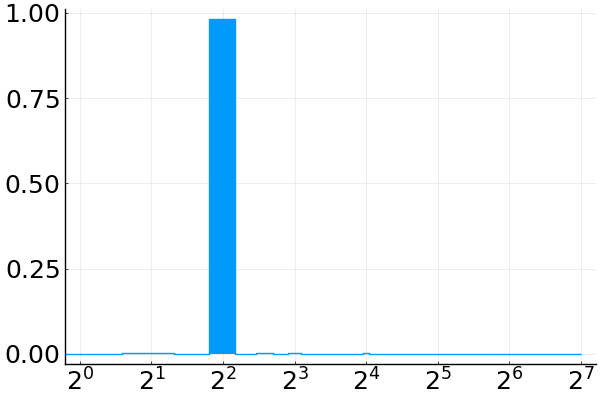} \\
    && \small{g) $\Pr[L=1]=0.98$} & 
    \small{h) $\Pr[L=2]=0.98$} & 
    \small{i) $\Pr[L=8]=0.02$} \\
    & \rotatebox{90}{$\quad$ \small{Arbitrary}}  & 
    \includegraphics[width=0.26\columnwidth]{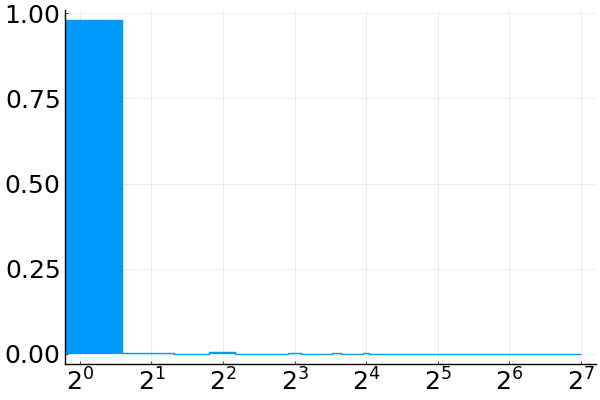} &
    \includegraphics[width=0.26\columnwidth]{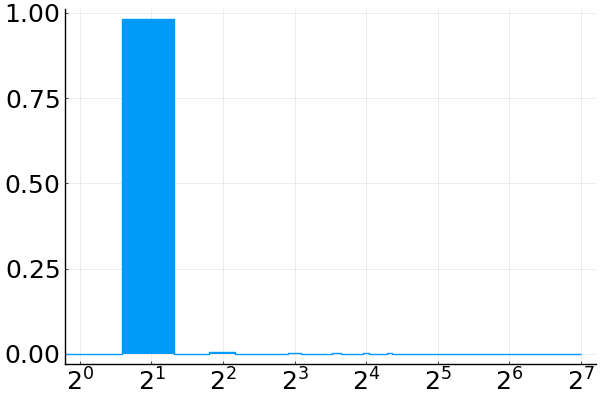} &
    \includegraphics[width=0.26\columnwidth]{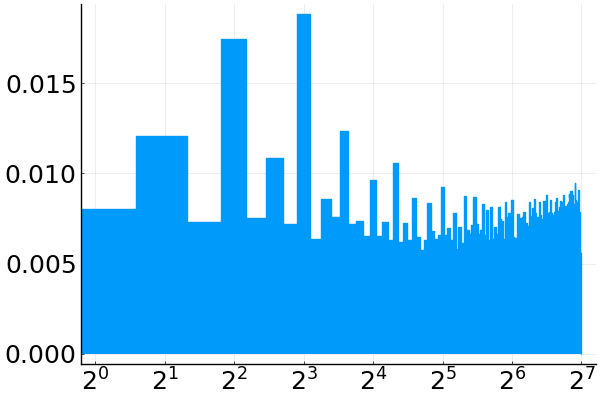}
    \end{tabular}
    \caption{\new{Probability of cycle length in synchronous structured CvHNNs when both matrices A and B are symmetric.}}
    \label{fig:SymSym}
\end{figure}

Note that the CvHNN has a high probability of stabilizing in a cycle of length $L=8$ when the entries of $A$ and $B$ have non-arbitrary signs. On the one hand, if the entries of $A$ have arbitrary signs but $B$ is either positive or negative, the CvHNN has a high probability of stabilizing in a cycle of length $L=4$. On the other hand, if $A$ is either positive or negative, but the entries of $B$ have arbitrary signs, the CvHNN has a high probability of stabilizing at a cycle of length less than or equal to $L=2$. Finally, the CvHNN exhibits cycles of several different lengths with similar probabilities when $A$ and $B$ have entries with arbitrary signs. 

\subsection{A is symmetric and B is arbitrary}

\begin{figure}[t]
    \begin{tabular}{ll|ccc}
    && \multicolumn{3}{c}{\underline{Symmetric matrix $A$}} \\ 
    && \small{Positive} & \small{Negative} & \small{Arbitrary} \\ \hline
    && \small{a) $\Pr[L=8]=0.96$} 
    & \small{b) $\Pr[L=8]=0.96$} 
    & \small{c) $\Pr[L=4]=0.98$} \\
    \multirow{3}{*}{\hfill \rotatebox[origin=lB]{90}{\underline{Arbitrary matrix $B$}}} & \rotatebox{90}{$\quad$ \small{Positive}} & 
    \includegraphics[width=0.26\columnwidth]{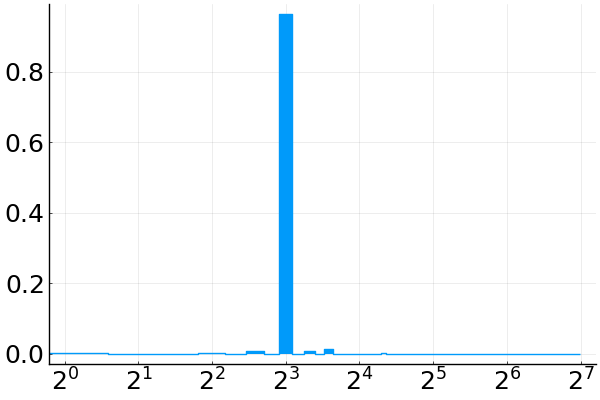} &
    \includegraphics[width=0.26\columnwidth]{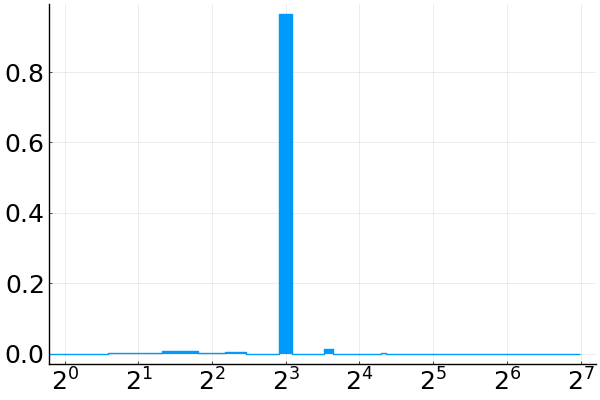} &
    \includegraphics[width=0.26\columnwidth]{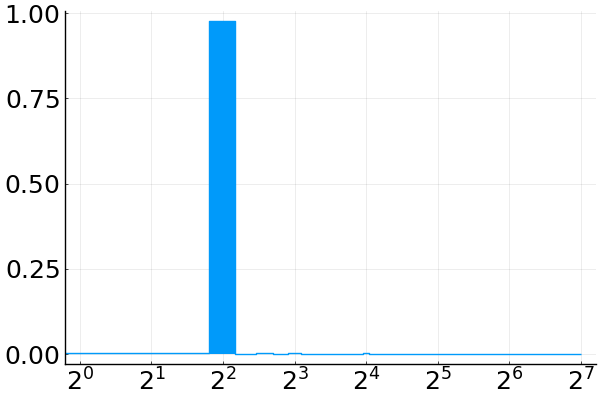} \\
    && \small{d) $\Pr[L=8]=0.96$} 
    & \small{e) $\Pr[L=8]=0.96$} 
    & \small{f) $\Pr[L=4]=0.98$} \\
    & \rotatebox{90}{$\quad$ \small{Negative}}  & 
    \includegraphics[width=0.26\columnwidth]{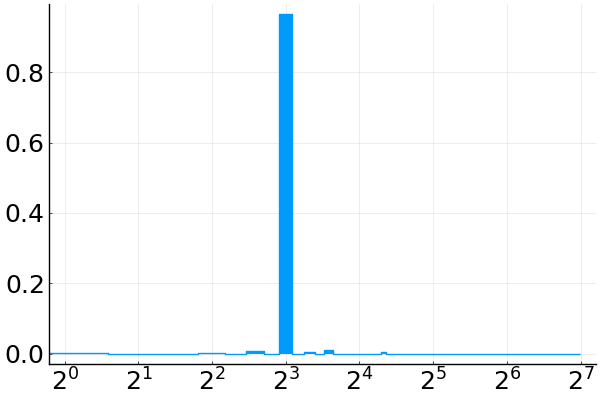} &
    \includegraphics[width=0.26\columnwidth]{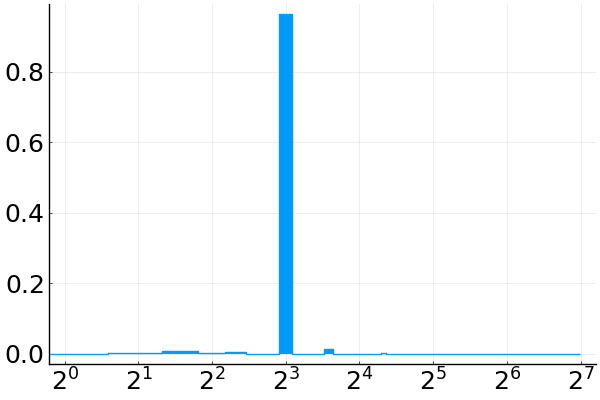} &
    \includegraphics[width=0.26\columnwidth]{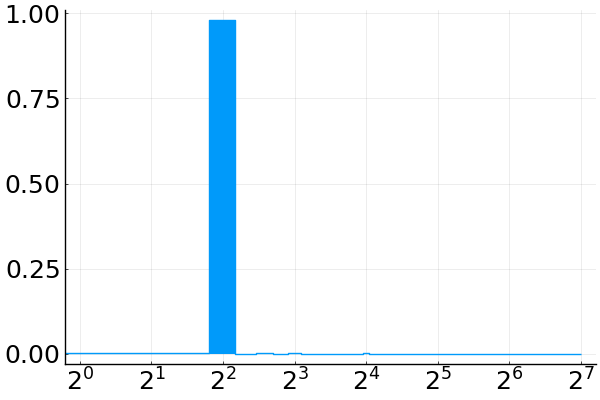} \\
    && \small{g) $\Pr[L=1]=0.99$} & 
    \small{h) $\Pr[L=2]=1.0$} & 
    \small{i) $\Pr[L=2]=0.8$} \\
    & \rotatebox{90}{$\quad$ \small{Arbitrary}}  & 
    \includegraphics[width=0.26\columnwidth]{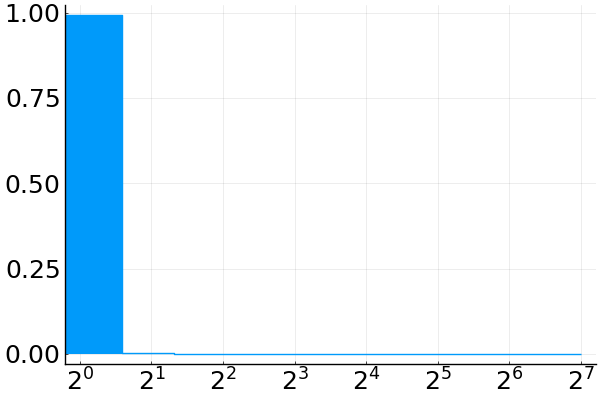} &
    \includegraphics[width=0.26\columnwidth]{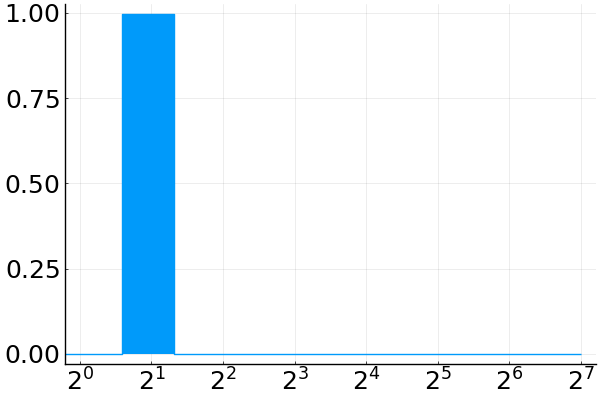} &
    \includegraphics[width=0.26\columnwidth]{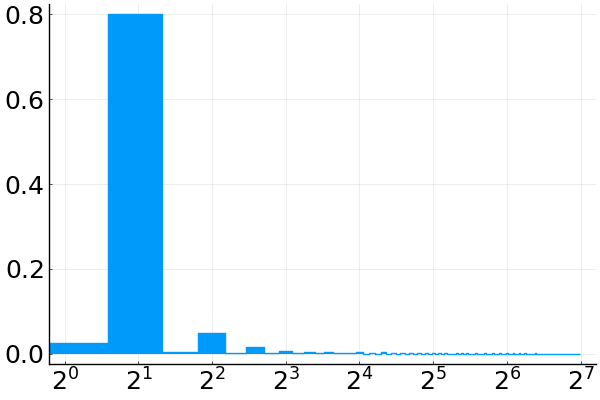}
    \end{tabular}
    \caption{\new{Probability of cycle length in synchronous structured CvHNNs when $A$ is symmetric and $B$ is an arbitrary matrix.}}
    \label{fig:SymArb}
\end{figure}

Let us now address the case where $A$ is symmetric but $B$ has no symmetry restrictions. Restrictions are still applied to the sign of elements in $B$. The histogram of the resulting cycle lengths is shown in Figure \ref{fig:SymArb}. The probability of the most frequent cycle length is also provided in Figure \ref{fig:SymArb}. 

A behavior similar to that of the previous case is observed except when the entries of $A$ and $B$ have no sign constraints. Although we observed a prevalence of cycles of length $L=2$, the CvHNN based on a symmetric matrix $A$ eventually stabilized in extremely large cycles.

\subsection{A is antisymmetric and B is antisymmetric}

\begin{figure}[t]
    \begin{tabular}{ll|ccc}
    && \multicolumn{3}{c}{\underline{Antisymmetric matrix $A$}} \\ 
    && \small{Positive} & \small{Negative} & \small{Arbitrary} \\ \hline
    && \small{a) $\Pr[L=8]=0.97$} 
    & \small{b) $\Pr[L=8]=0.98$} 
    & \small{c) $\Pr[L=2]=0.26$} \\
    \multirow{3}{*}{\hfill \rotatebox[origin=lB]{90}{\underline{Antisymmetric matrix $B$}}} & \rotatebox{90}{$\quad$ \small{Positive}} & 
    \includegraphics[width=0.26\columnwidth]{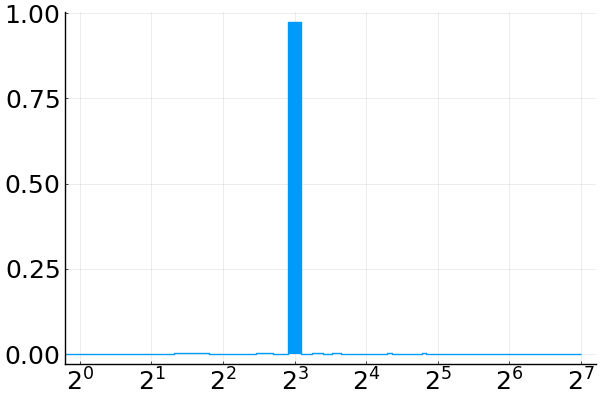} &
    \includegraphics[width=0.26\columnwidth]{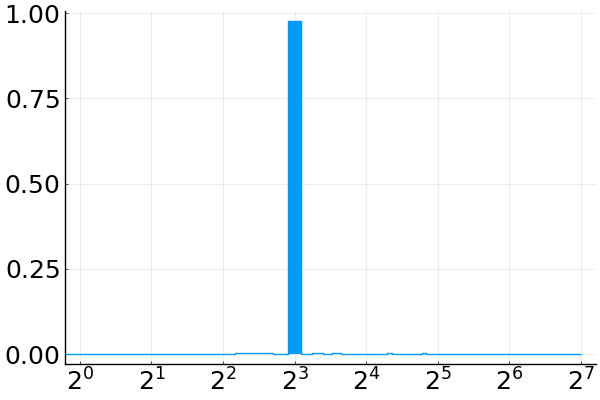} &
    \includegraphics[width=0.26\columnwidth]{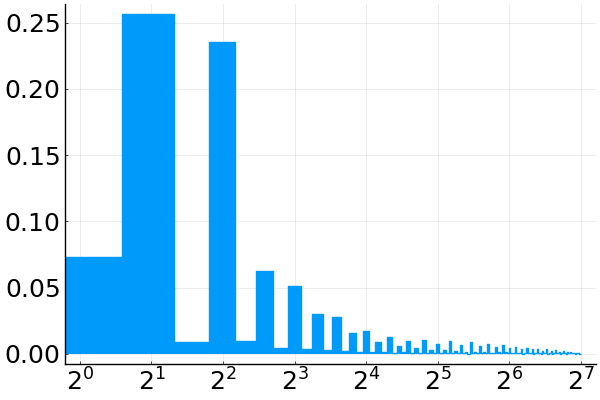} \\
    && \small{d) $\Pr[L=8]=0.98$} 
    & \small{e) $\Pr[L=8]=0.98$} 
    & \small{f) $\Pr[L=2]=0.26$} \\
    & \rotatebox{90}{$\quad$ \small{Negative}}  & 
    \includegraphics[width=0.26\columnwidth]{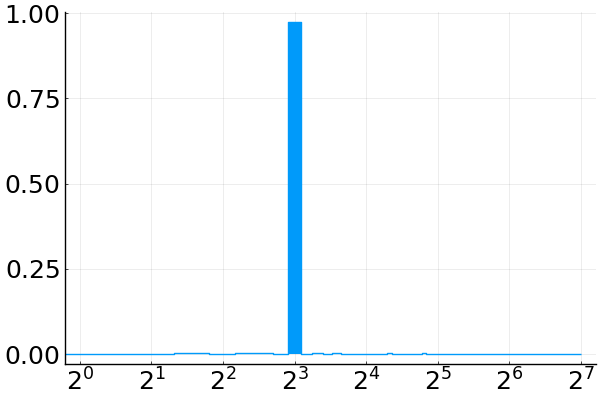} &
    \includegraphics[width=0.26\columnwidth]{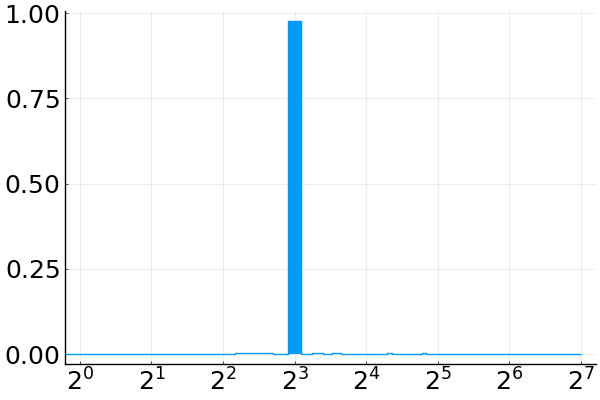} &
    \includegraphics[width=0.26\columnwidth]{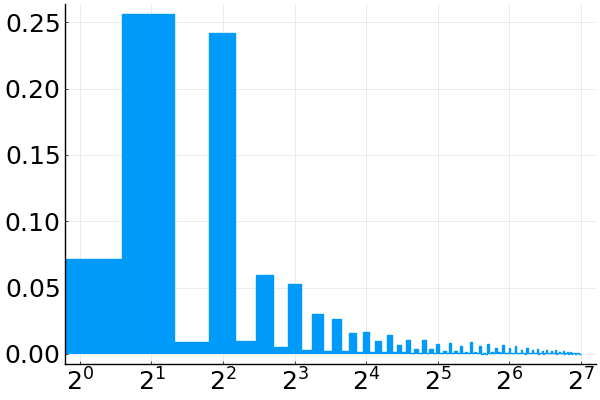} \\
    && \small{g) $\Pr[L=4]=0.56$} & 
    \small{h) $\Pr[L=4]=0.55$} & 
    \small{i) $\Pr[L=8]=0.02$} \\
    & \rotatebox{90}{$\quad$ \small{Arbitrary}}  & 
    \includegraphics[width=0.26\columnwidth]{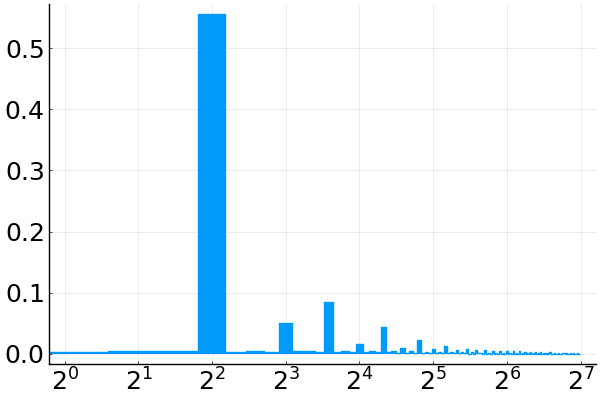} &
    \includegraphics[width=0.26\columnwidth]{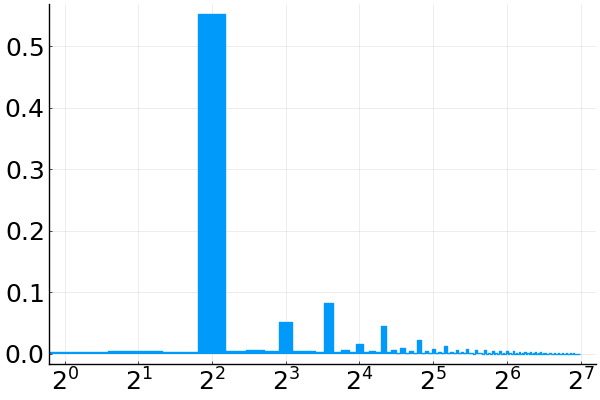} &
    \includegraphics[width=0.26\columnwidth]{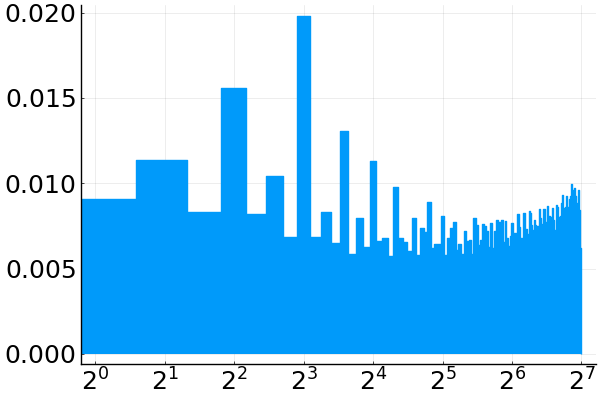}
    \end{tabular}
    \caption{\new{Probability of cycle length in synchronous structured CvHNNs when both $A$ and $B$ are antisymmetric matrices.}}
    \label{fig:ASymASym}
\end{figure}

When both $A$ and $B$ are antisymmetric, we have $M^T = -M$, i.e., $M$ is also antisymmetric. Figure \ref{fig:ASymASym} shows the histogram of the cycle lengths $L$ obtained when constraints are eventually imposed on the upper triangular part of the matrices $A$ and $B$. This figure also shows the probability of the most frequent cycle length for each sign constraint.

Analogous to the case in which $A$ and $B$ are symmetric matrices, the CvHNN has a high probability of stabilizing in a cycle of length $L=8$ when $A$ and $B$ are both anti-symmetric matrices with constraints in the sign of their entries. No dominant cycle length is observed when the entries of $A$ and $B$ have arbitrary signs. For example, if the entries of $A$ are positive and those of $B$ have arbitrary signs, the most probable cycle length is $L=4$, with probability $\Pr[L=4]=0.56$. However, the mean cycle length is $13.2$ with a standard deviation of $21.5$ when $A$ is positive and $B$ has arbitrary signs.

\subsection{A is antisymmetric and B is arbitrary} 

\begin{figure}[t]
    \begin{tabular}{ll|ccc}
    && \multicolumn{3}{c}{\underline{Antisymmetric matrix $A$}} \\ 
    && \small{Positive} & \small{Negative} & \small{Arbitrary} \\ \hline
    && \small{a) $\Pr[L=4]=1.00$} 
    & \small{b) $\Pr[L=4]=1.00$} 
    & \small{c) $\Pr[L=4]=1.00$} \\
    \multirow{3}{*}{\hfill \rotatebox[origin=lB]{90}{\underline{Arbitrary matrix $B$}}} & \rotatebox{90}{$\quad$ \small{Positive}} & 
    \includegraphics[width=0.26\columnwidth]{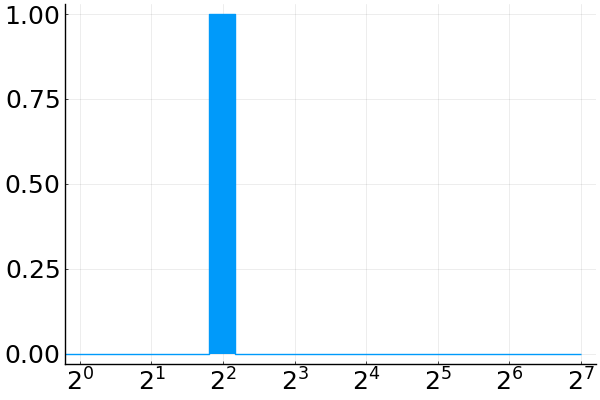} &
    \includegraphics[width=0.26\columnwidth]{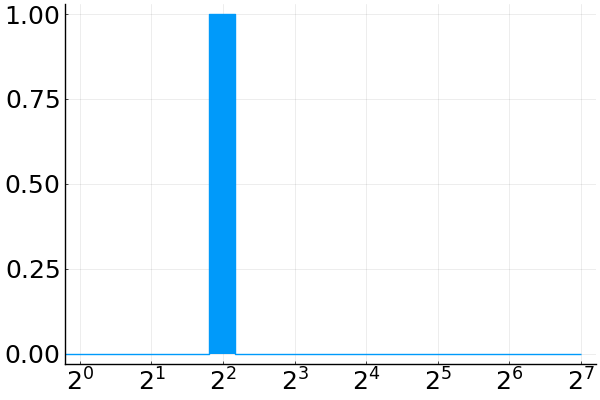} &
    \includegraphics[width=0.26\columnwidth]{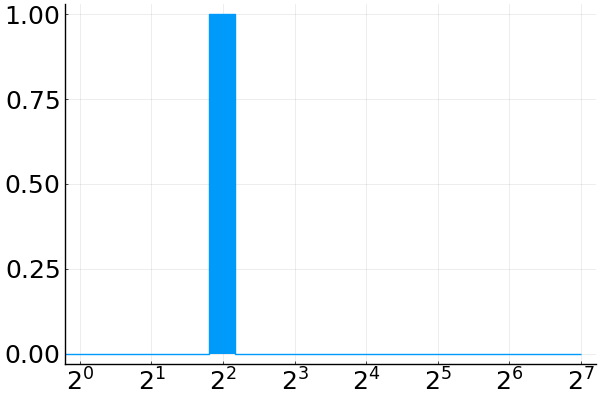} \\
    && \small{d) $\Pr[L=4]=1.00$} 
    & \small{e) $\Pr[L=4]=1.00$} 
    & \small{f) $\Pr[L=4]=1.00$} \\
    & \rotatebox{90}{$\quad$ \small{Negative}}  & 
    \includegraphics[width=0.26\columnwidth]{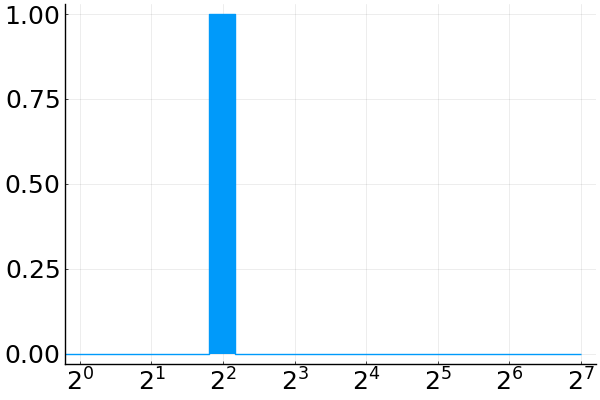} &
    \includegraphics[width=0.26\columnwidth]{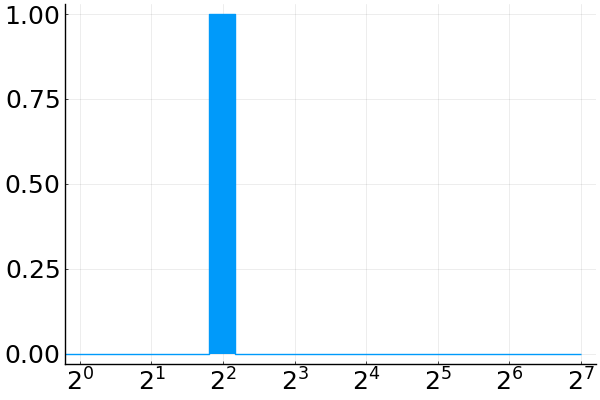} &
    \includegraphics[width=0.26\columnwidth]{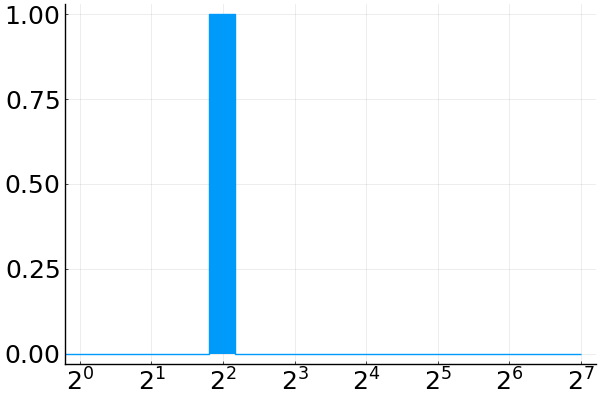} \\
    && \small{g) $\Pr[L=4]=0.92$} & 
    \small{h) $\Pr[L=4]=0.92$} & 
    \small{i) $\Pr[L=4]=0.81$} \\
    & \rotatebox{90}{$\quad$ \small{Arbitrary}}  & 
    \includegraphics[width=0.26\columnwidth]{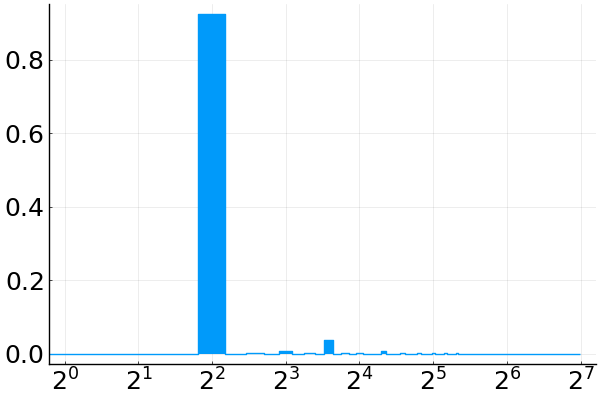} &
    \includegraphics[width=0.26\columnwidth]{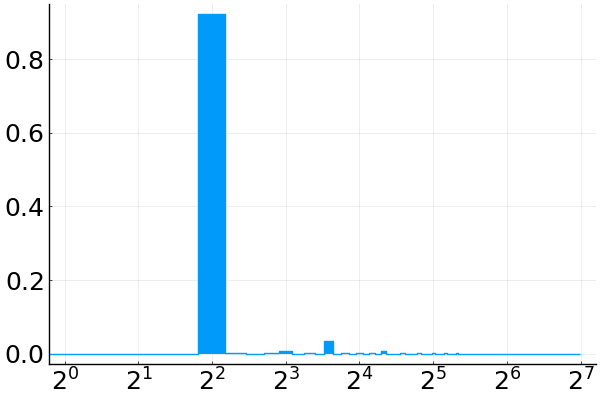} &
    \includegraphics[width=0.26\columnwidth]{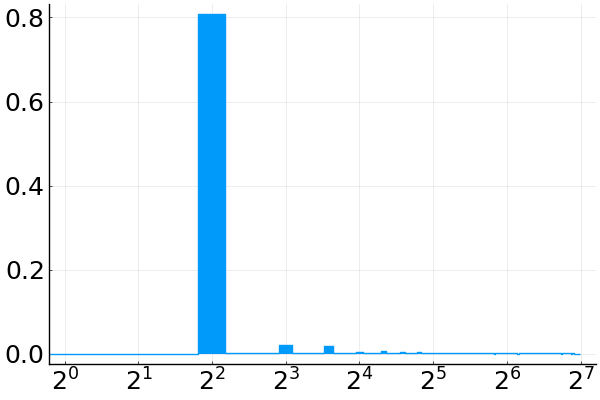}
    \end{tabular}
    \caption{\new{Probability of cycle length in synchronous structured CvHNNs when $A$ is symmetric and $B$ is an arbitrary matrix.}}
    \label{fig:ASymArb}
\end{figure}

Let us now examine the case where $A$ is an antisymmetric matrix, but there are no symmetry restrictions on the matrix $B$. Figure \ref{fig:ASymArb} illustrates the histogram of the cycle lengths for the resulting CvHNN models.

In contrast to the previous cases, we observe the prevalence of cycles of length $L=4$. Precisely, over $99\%$ of instances resulted in a cycle of length exactly $4$.
Despite the prevalence of cycles of length $L=4$, the CvHNN exhibited cycles of length as large as $L=16$ when the entries of $B$ are either positive or negative. Finally, we would like to point out that we observed large cycles when the entries of $B$ have arbitrary signs. 

\subsection{A is arbitrary and B is symmetric}

\begin{figure}[t]
    \begin{tabular}{ll|ccc}
    && \multicolumn{3}{c}{\underline{Arbitrary matrix $A$}} \\ 
    && \small{Positive} & \small{Negative} & \small{Arbitrary} \\ \hline
    && \small{a) $\Pr[L=8]=0.96$} 
    & \small{b) $\Pr[L=8]=0.96$} 
    & \small{c) $\Pr[L=4]=1.00$} \\
    \multirow{3}{*}{\hfill \rotatebox[origin=lB]{90}{\underline{Symmetric matrix $B$}}} & \rotatebox{90}{$\quad$ \small{Positive}} & 
    \includegraphics[width=0.26\columnwidth]{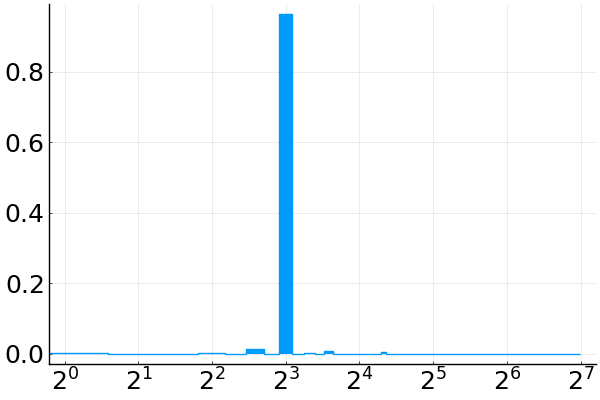} &
    \includegraphics[width=0.26\columnwidth]{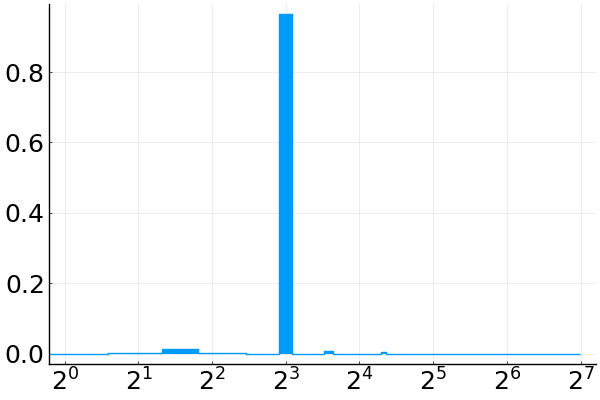} &
    \includegraphics[width=0.26\columnwidth]{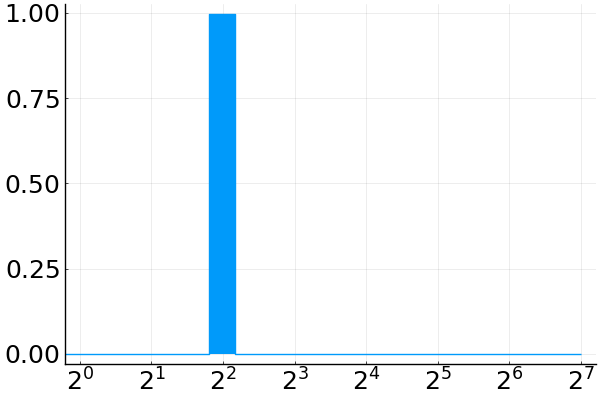} \\
    && \small{d) $\Pr[L=8]=0.96$} 
    & \small{e) $\Pr[L=8]=0.96$} 
    & \small{f) $\Pr[L=4]=1.00$} \\
    & \rotatebox{90}{$\quad$ \small{Negative}}  & 
    \includegraphics[width=0.26\columnwidth]{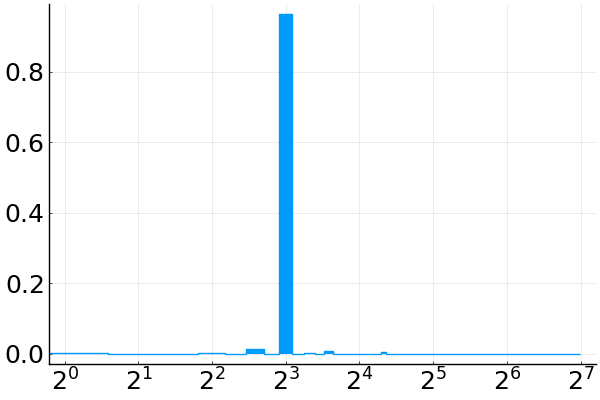} &
    \includegraphics[width=0.26\columnwidth]{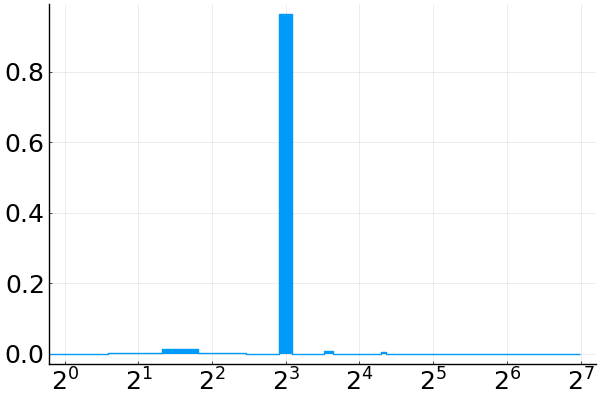} &
    \includegraphics[width=0.26\columnwidth]{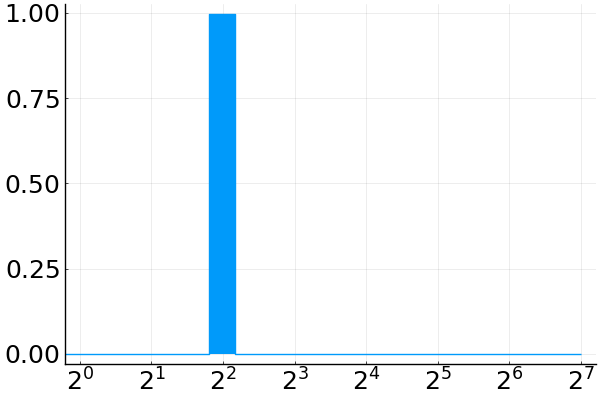} \\
    && \small{g) $\Pr[L=1]=0.98$} & 
    \small{h) $\Pr[L=2]=0.98$} & 
    \small{i) $\Pr[L=4]=0.73$} \\
    & \rotatebox{90}{$\quad$ \small{Arbitrary}}  & 
    \includegraphics[width=0.26\columnwidth]{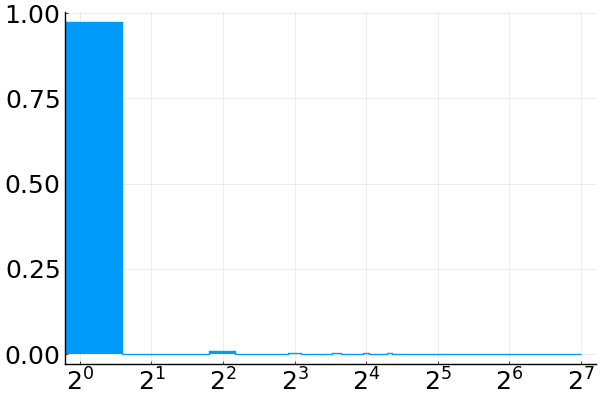} &
    \includegraphics[width=0.26\columnwidth]{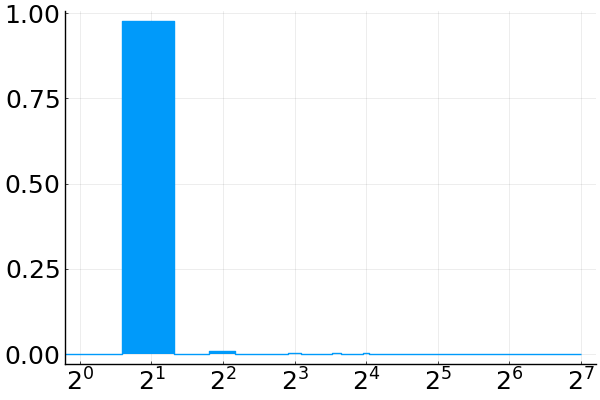} &
    \includegraphics[width=0.26\columnwidth]{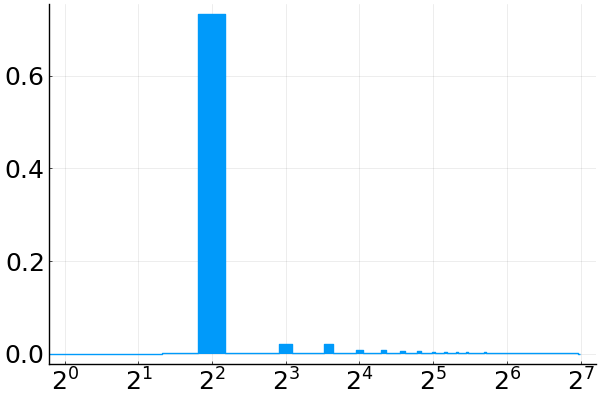}
    \end{tabular}
    \caption{\new{Probability of cycle length in synchronous structured CvHNNs when $A$ is arbitrary and $B$ is a symmetric matrix.}}
    \label{fig:ArbSym}
\end{figure}

Let us now consider the case where matrix \( A \) has no symmetry restrictions, but matrix \( B \) is symmetric. The outcomes of this scenario are illustrated in Figure \ref{fig:ArbSym}. Similar to the situation where \( A \) is symmetric and \( B \) is arbitrary, the complex-valued Hopfield Neural Network (HNN) has a high probability of stabilizing in a cycle of length \( L = 8 \) when constraints are placed on the signs of both \( A \) and \( B \). When the entries of \( B \) have no sign restrictions while \( A \) is either entirely positive or negative, the network often stabilizes in cycles of length \( L = 4 \). Furthermore, when the entries of \( A \) can take arbitrary signs but \( B \) is strictly positive or negative, the network typically settles at an equilibrium or stabilizes in a cycle of length \( L = 2 \). Finally, we have observed extremely large cycles occurring when both \( A \) and \( B \) have no sign constraints.

\subsection{A is arbitrary and B is antisymmetric}

\begin{figure}[t]
    \begin{tabular}{ll|ccc}
    && \multicolumn{3}{c}{\underline{Arbitrary matrix $A$}} \\ 
    && \small{Positive} & \small{Negative} & \small{Arbitrary} \\ \hline
    && \small{a) $\Pr[L=1]=0.99$} 
    & \small{b) $\Pr[L=2]=1.00$} 
    & \small{c) $\Pr[L=2]=0.76$} \\
    \multirow{3}{*}{\hfill \rotatebox[origin=lB]{90}{\underline{Antisymmetric matrix $B$}}} & \rotatebox{90}{$\quad$ \small{Positive}} & 
    \includegraphics[width=0.26\columnwidth]{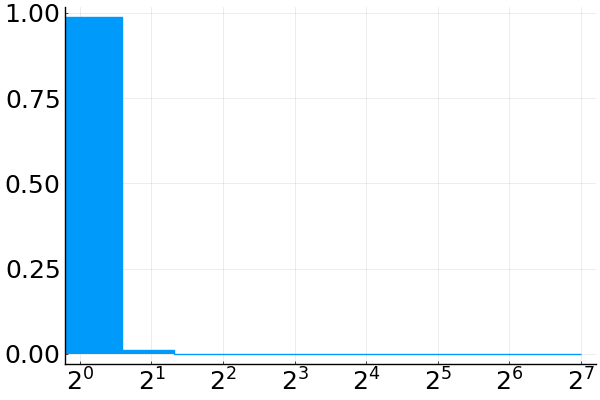} &
    \includegraphics[width=0.26\columnwidth]{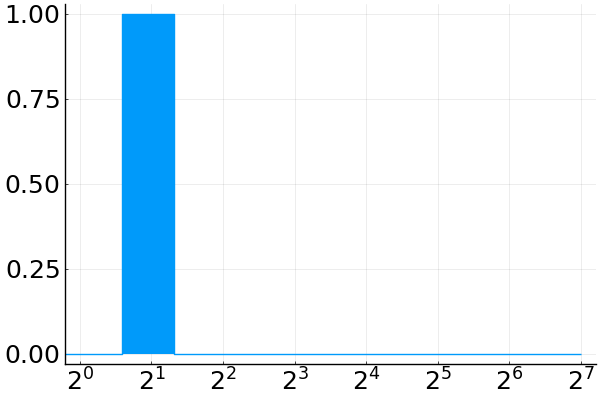} &
    \includegraphics[width=0.26\columnwidth]{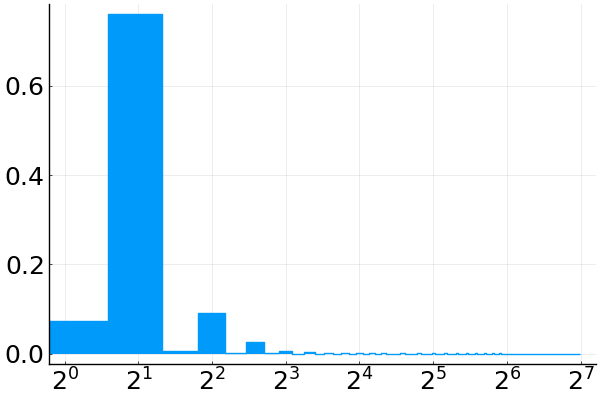} \\
    && \small{d) $\Pr[L=1]=0.99$} 
    & \small{e) $\Pr[L=2]=1.00$} 
    & \small{f) $\Pr[L=2]=0.76$} \\
    & \rotatebox{90}{$\quad$ \small{Negative}}  & 
    \includegraphics[width=0.26\columnwidth]{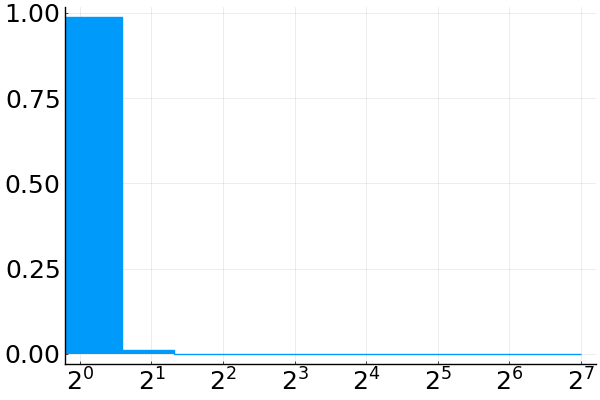} &
    \includegraphics[width=0.26\columnwidth]{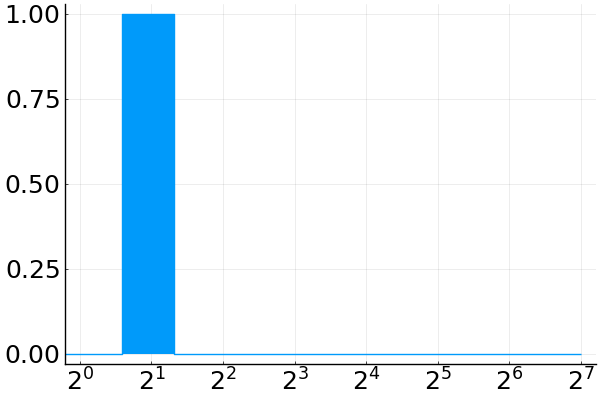} &
    \includegraphics[width=0.26\columnwidth]{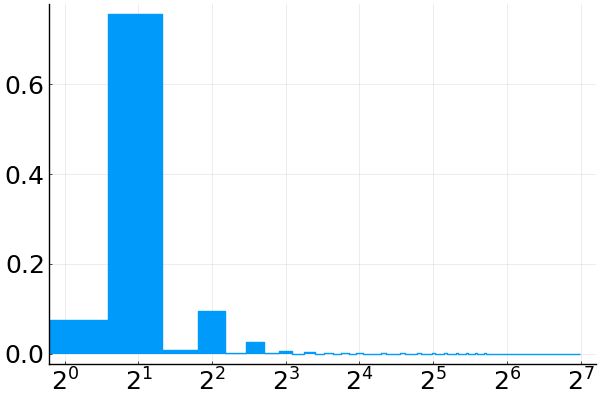} \\
    && \small{g) $\Pr[L=1]=0.99$} & 
    \small{h) $\Pr[L=2]=1.00$} & 
    \small{i) $\Pr[L=2]=0.86$} \\
    & \rotatebox{90}{$\quad$ \small{Arbitrary}}  & 
    \includegraphics[width=0.26\columnwidth]{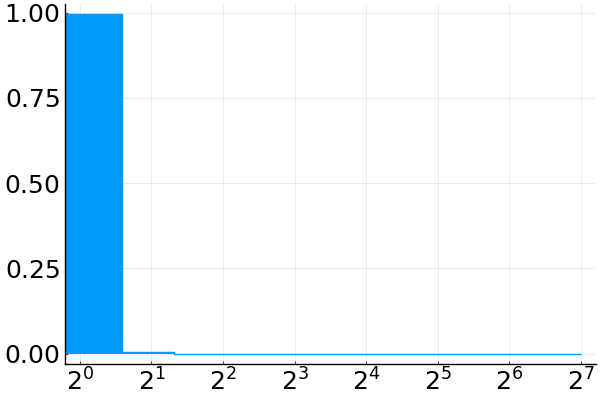} &
    \includegraphics[width=0.26\columnwidth]{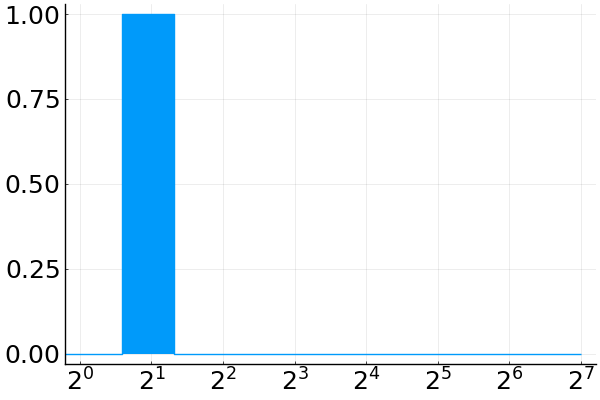} &
    \includegraphics[width=0.26\columnwidth]{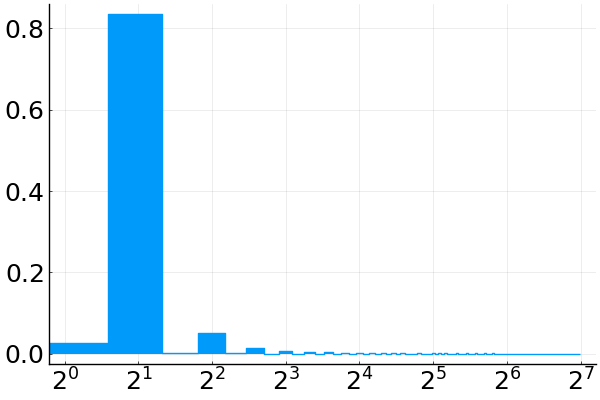}
    \end{tabular}
    \caption{\new{Probability of cycle length in synchronous structured CvHNNs when $A$ is arbitrary and $B$ is a antisymmetric matrix.}}
    \label{fig:ArbASym}
\end{figure}

The dynamics of the CvHNN is dominated by the sign of $A$ when $B$ is an antisymmetric matrix. Precisely, the network usually settles at an equilibrium when the entries of $A$ are all positive. If the entries of $A$ are all negative, the network often stabilizes in a cycle of length $L=2$. Finally, although the dynamics of the CvHNN is dominated by cycles of length $L=2$, we observed extremely large cycles when no sign restriction is imposed in $A$.

\subsection{A is arbitrary and B is arbitrary}

\begin{figure}[t]
    \begin{tabular}{ll|ccc}
    && \multicolumn{3}{c}{\underline{Arbitrary matrix $A$}} \\ 
    && \small{Positive} & \small{Negative} & \small{Arbitrary} \\ \hline
    && \small{a) $\Pr[L=8]=0.98$} 
    & \small{b) $\Pr[L=8]=0.98$} 
    & \small{c) $\Pr[L=4]=0.99$} \\
    \multirow{3}{*}{\hfill \rotatebox[origin=lB]{90}{\underline{Arbitrary matrix $B$}}} & \rotatebox{90}{$\quad$ \small{Positive}} & 
    \includegraphics[width=0.26\columnwidth]{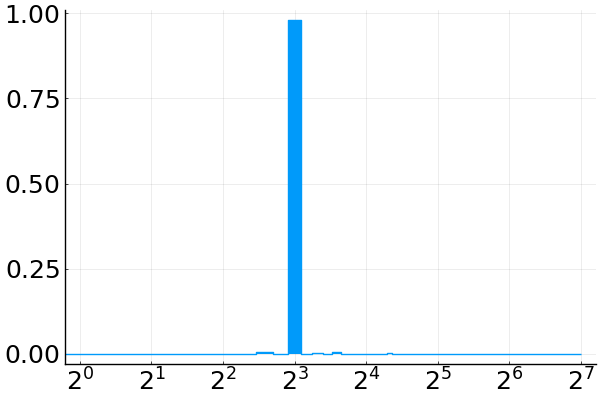} &
    \includegraphics[width=0.26\columnwidth]{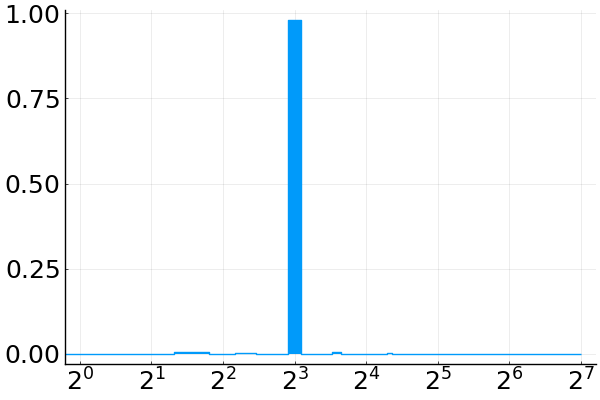} &
    \includegraphics[width=0.26\columnwidth]{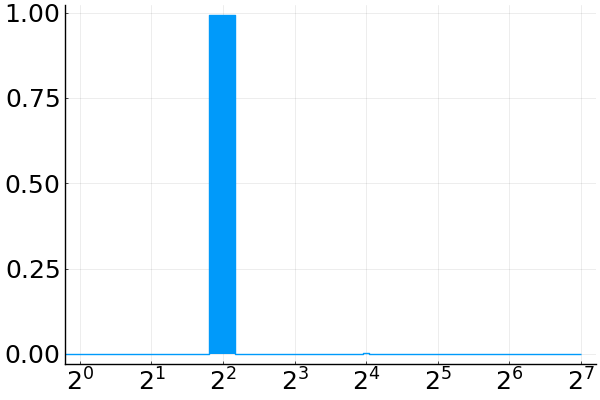} \\
    && \small{d) $\Pr[L=8]=0.98$} 
    & \small{e) $\Pr[L=8]=0.98$} 
    & \small{f) $\Pr[L=4]=0.99$} \\
    & \rotatebox{90}{$\quad$ \small{Negative}}  & 
    \includegraphics[width=0.26\columnwidth]{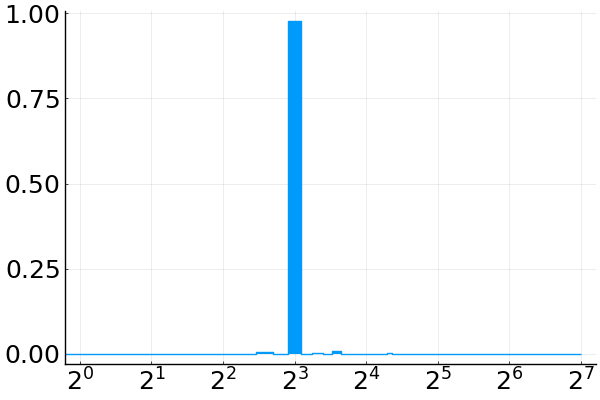} &
    \includegraphics[width=0.26\columnwidth]{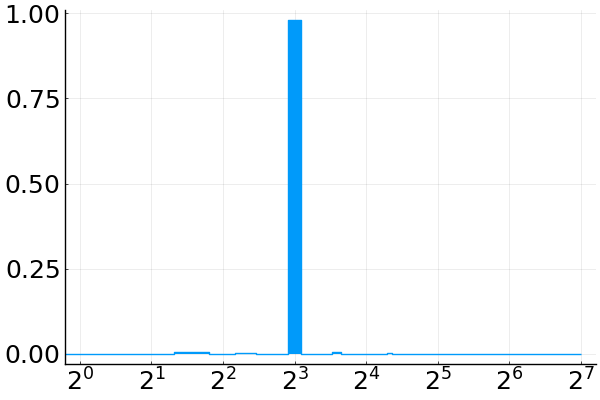} &
    \includegraphics[width=0.26\columnwidth]{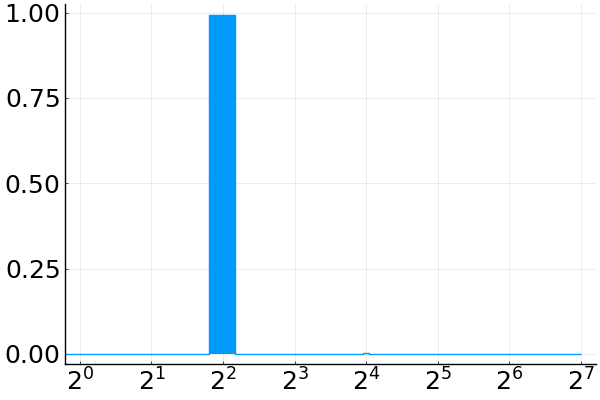} \\
    && \small{g) $\Pr[L=1]=0.99$} & 
    \small{h) $\Pr[L=2]=0.99$} & 
    \small{i) $\Pr[L=4]=0.02$} \\
    & \rotatebox{90}{$\quad$ \small{Arbitrary}}  & 
    \includegraphics[width=0.26\columnwidth]{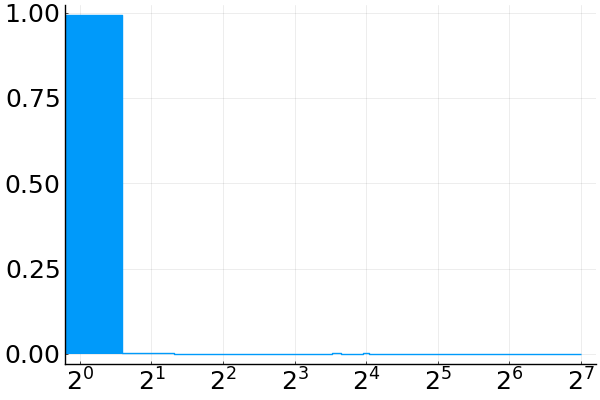} &
    \includegraphics[width=0.26\columnwidth]{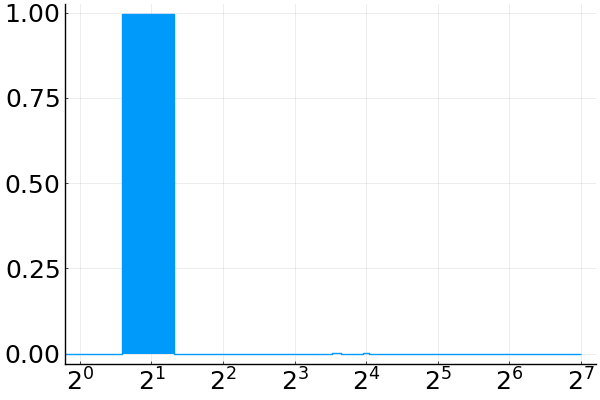} &
    \includegraphics[width=0.26\columnwidth]{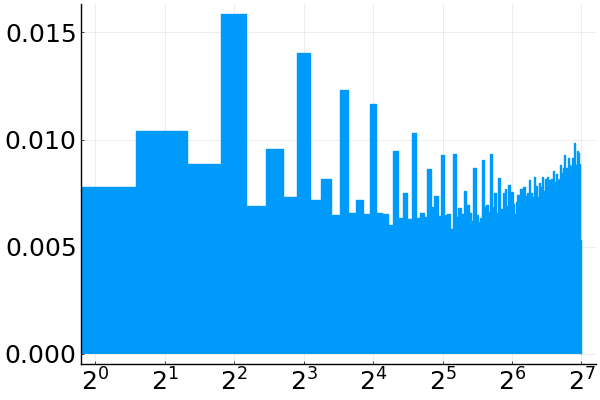}
    \end{tabular}
    \caption{Cycle lengths when $A$ is arbitrary and $B$ is a antisymmetric matrix.}
    \label{fig:ArbArb}
\end{figure}

Finally, let us consider the case in which both $A$ and $B$ are arbitrary, i.e., no symmetric restrictions are imposed in these matrices. The outcome of this case is depicted in Figure \ref{fig:ArbArb}.
Interestingly, the CvHNN usually stabilizes in a cycle of length $L=8$ if the matrices $A$ and $B$ are either positive or negative. On the one hand, if the entries of $A$ have arbitrary signs but $B$ is either positive or negative, the network often stabilizes in a cycle of length $L=4$. On the other hand, if the entries of $B$ have no sign constrain but $A$ is either positive or negative, the network frequently settles at an equilibrium or stabilizes in a cycle of length $L=2$. Finally, many cycle lengths, including extremely large cycles, are observed when $A$ and $B$ have no sign constraints.

\subsection{Conclusions from the computational experiments with synchronous CvHNNs} \label{ssec:summary}

\new{
The computational simulations presented in this subsection clearly demonstrate that there is no universal correlation between cycle length and the symmetry structure of the weight matrix \( M \). However, it is evident that cycles of lengths \( L = 4 \), \( L = 8 \), and \( L < 2 \) frequently arise when specific constraints are applied to the signs of the matrices \( A \) or \( B \).
}

\section{Complex-Valued Hopfield Networks with Synpatic Weight Matrix in Polar Form} \label{sec:polar}

In the previous sections, the synaptic weight matrix $M$ was represented as the sum of its real and imaginary parts. This section presents results by expressing the matrix $M$ in polar form.

Recall that the polar representation of a complex number includes both a \textit{magnitude} component and a \textit{phase} component. Thus, we can represent the synaptic weight matrix \( M \) using two matrices: \( G \) for the magnitude and \( P \) for the phase, where \( G_{ij} \) and \( P_{ij} \) represent the magnitude and phase of the entry \( M_{ij} \), respectively. Mathematically, the matrix $M$ can be expressed by
\begin{equation}
M_{ij} = G_{ij} e^{\ii P_{ij}}, \quad \forall i,j=1,\ldots,N.    
\end{equation}
Like Section \ref{sec:experiments}, we examine the dynamics of the synchronous CvHNNs based on the characteristics of the matrices \( G \) and \( P \). Figure \ref{fig:Polar} illustrates the dynamics of the CvHNN models, where the entries of \( G \) and \( P \) are drawn from uniform distributions over the intervals \([0,1]\) and \([- \pi, \pi]\), respectively. Additionally, Figure \ref{fig:Polar} displays the probabilities of the most frequent cycle lengths.

Note from Figure \ref{fig:Polar} that the phase plays a key role in the dynamics of the CvHNN models. The network stabilizes in cycles with many different lengths, including extremely large cycle lengths, when the phase matrix is symmetric or arbitrary. In contrast, the complex-valued network usually settles at an equilibrium or stabilizes in a cycle of lengths $L= 2$ or $L=4$ when the phase matrix $P$ is antisymmetric. In particular, note that the matrix $M$ is Hermitian when $G$ is symmetric and $P$ is antisymmetric. Thus, from Theorem \ref{thm:Hermitian}, the network stabilizes in a cycle of length $L \leq 2$. When both $G$ and $P$ are antisymmetric, we have 
$\bar{M}_{ji} = G_{ji} e^{-\ii P_{ji}} = -G_{ij} e^{\ii P_{ij}} = -M_{ij}$. Therefore, the matrix $M$ is skew-Hermitian and, from Theorem \ref{thm:Skew-Hermitian}, the CvHNN stabilizes in a cycle of length $L=4$. Finally, when $P$ is antisymmetric but no symmetry restriction is imposed on $G$, the weights satisfy $\text{phase}(M_{ji})=\text{phase}(M_{ij})$. In this case, most instances stabilized in a cycle of length $L<2$. However, we observed a few cycles of lengths $L>2$. Namely, we observed nine instances with cycles of length $L=4$ and one instance with cycle of length $L=28$, the largest obtained cycle.

\begin{figure}[t]
    \begin{tabular}{ll|ccc}
    && \multicolumn{3}{c}{\underline{Matrix $G$ (magnitude)}} \\ 
    && \small{Symmetric} & \small{Antisymmetric} & \small{Arbitrary} \\ \hline
    && \small{a) $\Pr[L=4]=0.02$} 
    & \small{b) $\Pr[L=4]=0.02$} 
    & \small{c) $\Pr[L=4]=0.02$} \\
    \multirow{3}{*}{\hfill \rotatebox[origin=lB]{90}{\underline{Matrix $P$ (phase)}}} & \rotatebox{90}{$\quad$ \small{Symmetric}} & 
    \includegraphics[width=0.26\columnwidth]{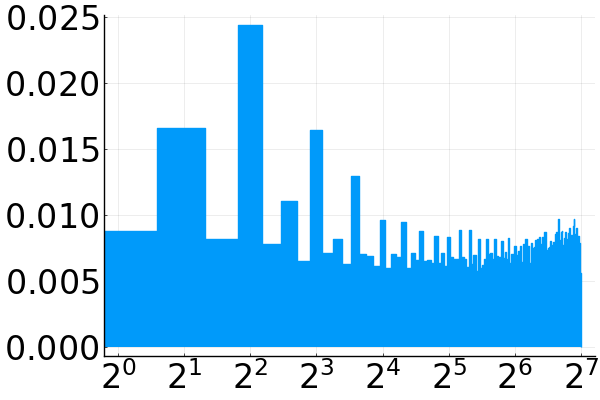} &
    \includegraphics[width=0.26\columnwidth]{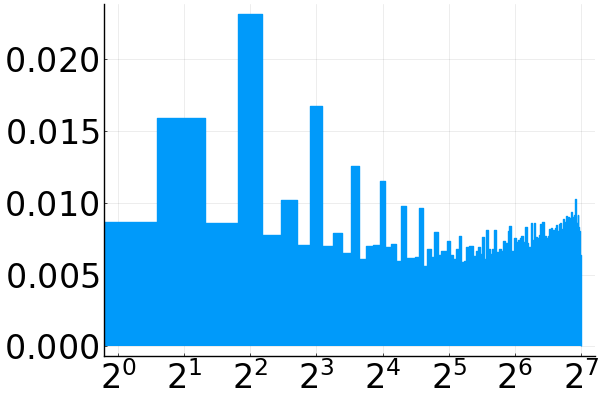} &
    \includegraphics[width=0.26\columnwidth]{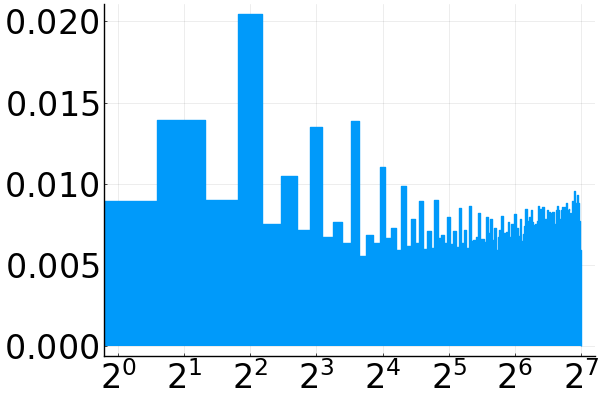} \\
    && \small{d) $\Pr[L=2]=0.77$} 
    & \small{e) $\Pr[L=4]=1.00$} 
    & \small{f) $\Pr[L=2]=0.51$} \\
    & \rotatebox{90}{$\quad$ \small{Antisymmetric}}  & 
    \includegraphics[width=0.26\columnwidth]{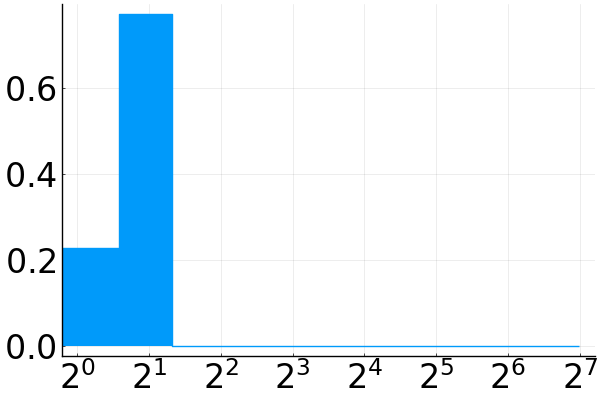} &
    \includegraphics[width=0.26\columnwidth]{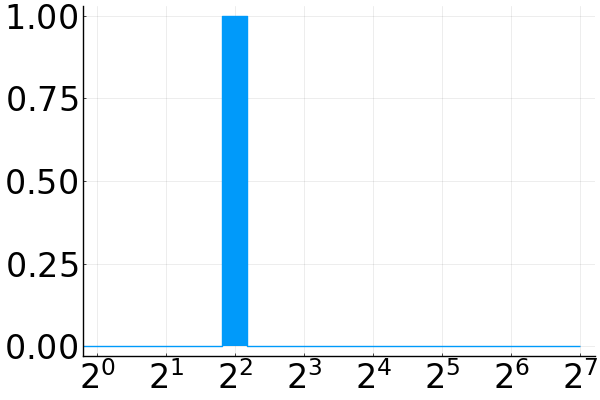} &
    \includegraphics[width=0.26\columnwidth]{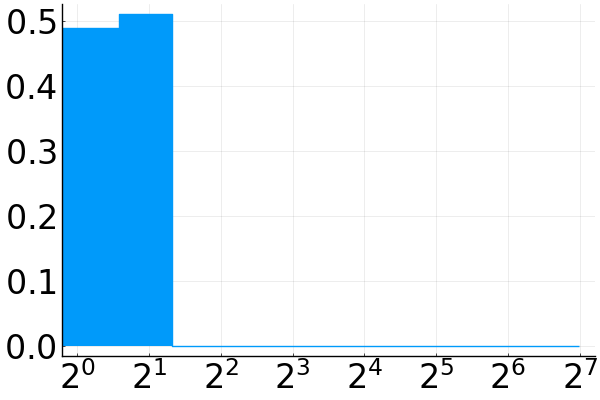} \\
    && \small{g) $\Pr[L=4]=0.02$} & 
    \small{h) $\Pr[L=8]=0.02$} & 
    \small{i) $\Pr[L=4]=0.02$} \\
    & \rotatebox{90}{$\quad$ \small{Arbitrary}}  & 
    \includegraphics[width=0.26\columnwidth]{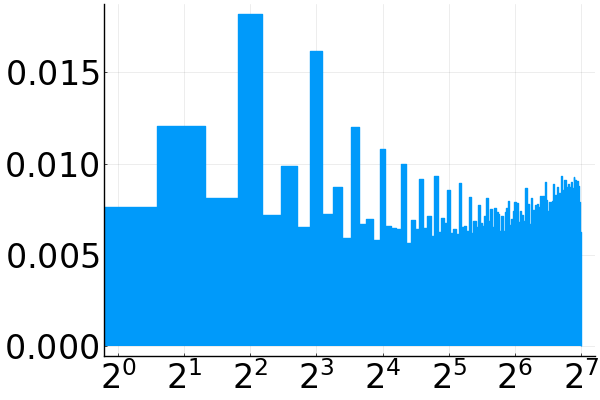} &
    \includegraphics[width=0.26\columnwidth]{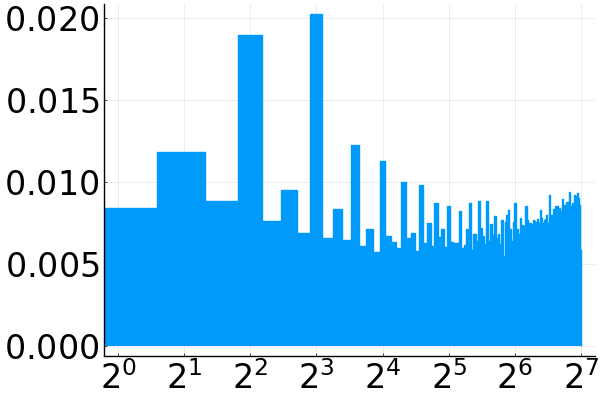} &
    \includegraphics[width=0.26\columnwidth]{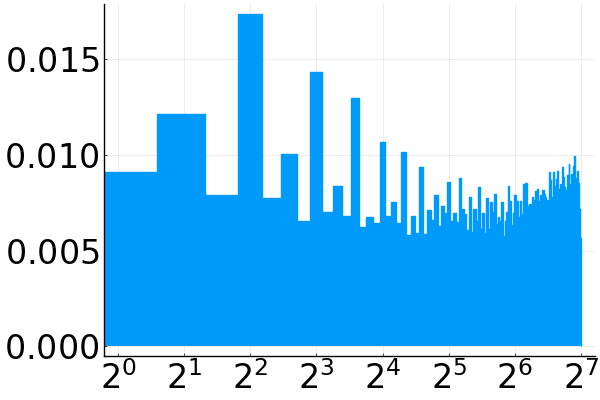}
    \end{tabular}
    \caption{\new{Probability of cycle length in synchronous structured CvHNNs obtained using the polar representation of $M$.}}
    \label{fig:Polar}
\end{figure}

\section{Conclusion} \label{sec:concluding}

\new{In this paper, we investigated the dynamics of CvHNNs with various structured weight matrices. Through theoretical analysis and computational experiments, we demonstrated that the behavior of these networks is significantly influenced by the properties of the synaptic weight matrix. Specifically, we introduced two new matrix forms—braided Hermitian and braided skew-Hermitian—and showed that networks employing these matrices exhibit periodic cycles of length eight when operated in parallel mode.}

\new{
Our computational experiments also revealed that the cycle lengths of the network states change according to the symmetry and sign constraints of the weight matrices. While certain configurations ensure stability or lead to short periodic cycles, others result in longer and more complex dynamics. The findings in this work can be used in conjunction with regular and multi-state associative memories, as detailed in \cite{DBLP:conf/ijcnn/GarimellaKG15}. Moreover, this paper contributes to a deeper understanding of structured CvHNNs and their potential applications in associative memory models, optimization problems, and neural computation. 
}

\new{Future research can build upon this work by exploring learning mechanisms for structured CvHNNs. Additionally, examining their role in modern machine learning architectures, such as attention-based networks and transformers, could yield new insights into efficient memory and recall mechanisms.}

\new{Moreover, in recent decades, numerous studies have developed hypercomplex-valued versions of neural network architectures. Evidence suggests that these models demonstrate superior performance in specific tasks compared to their real-valued counterparts \cite{minemoto17,Vieira2020ExtremeAuto-Encoding,Valle2020Hypercomplex-ValuedNetworks,Vieira2022AMachines}. Some researchers have extended the associative memory dynamics to higher-dimensional algebras, such as quaternions and octonions \cite{castro20nn,Garimella_2017}, and have presented results on convergence. Investigating structured hypercomplex-valued neural networks is also a promising area for future research.}


\end{document}